\title{Local - (learned) Global Mixed Kernels}% Force line breaks with \\
\pgfplotsset{compat=newest}
\newtheorem{theorem}{Theorem}
\crefname{theorem}{theorem}{Theorems}
\Crefname{Theorem}{Theorem}{Theorems}
\newaliascnt{lemma}{theorem}
\newtheorem{lemma}[lemma]{Lemma}
\crefname{lemma}{lemma}{lemmas}
\Crefname{Lemma}{Lemma}{Lemmas}
\newaliascnt{corollary}{theorem}
\crefname{corollary}{corollary}{corollaries}
\Crefname{Corollary}{Corollary}{Corollaries}
\newaliascnt{proposition}{theorem}
\crefname{proposition}{proposition}{propositions}
\Crefname{Proposition}{Proposition}{Propositions}
\newaliascnt{definition}{theorem}
\crefname{definition}{definition}{definitions}
\Crefname{Definition}{Definition}{Definitions}
\newtheorem{assumption}{\textbf{A}\hspace{-3pt}}
\Crefname{assumption}{\textbf{A}\hspace{-3pt}}{\textbf{A}\hspace{-3pt}}
\crefname{assumption}{\textbf{A}}{\textbf{A}}
\newaliascnt{remark}{theorem}
\crefname{remark}{remark}{remarks}
\Crefname{Remark}{Remark}{Remarks}
\crefname{example}{example}{examples}
\Crefname{Example}{Example}{Examples}
\crefname{algorithm}{algorithm}{algorithms}
\Crefname{Algorithm}{Algorithm}{Algorithms}
\crefname{figure}{figure}{figures}
\Crefname{Figure}{Figure}{Figures}
\def\rset{\mathbb{R}}
\def\nset{\mathbb{N}}
\def\param{w}
\def\paramscal{\omega}
\def\Algo{\texttt{ASkewSGD}}
\DeclareMathOperator{\clip}{clip}
\def\eqsp{\,}
\def\wrt{w.r.t.}
\def\rme{\mathrm{e}}
\newcommand{\ooint}[1]{\left(#1\right)}
\newcommand{\ccint}[1]{\left[#1\right]}
\def\dist{\operatorname{d}}
\def\objfunc{\ell}
\def\ineqfunc{g}
\def\sign{\operatorname{sign}}
\newcommand{\cZ}{\mathcal{Z}}
\def\ActiveSet{\operatorname{I}}
\newcommand{\argmin}{\mathop{\mathrm{arg\,min}}}
\newcommand{\norm}[1]{\left \lVert #1 \right \rVert}
\newcommand{\rmd}{\mathrm{d}}
\def\sy{\mathsf{y}}
\def\sz{\mathsf{z}}
\def\sv{\mathsf{v}}
\def\PE{\mathbb{E}}
\begin{document}
\addtocontents{toc}{\protect\setcounter{tocdepth}{0}}

% If your paper is accepted and the title of your paper is very long,
% the style will print as headings an error message. Use the following
% command to supply a shorter title of your paper so that it can be
% used as headings.
%
%\runningtitle{I use this title instead because the last one was very long}

% If your paper is accepted and the number of authors is large, the
% style will print as headings an error message. Use the following
% command to supply a shorter version of the authors names so that
% they can be used as headings (for example, use only the surnames)
%
%\runningauthor{Surname 1, Surname 2, Surname 3, ...., Surname n}

\twocolumn[

\aistatstitle{\Algo\ : An Annealed interval-constrained Optimisation method to train Quantized Neural Networks}

\aistatsauthor{ Louis Leconte \And Sholom Schechtman \And  Eric Moulines }

\aistatsaddress{ LISITE, Sorbonne University\\
Mathematical and Algorithmic Sciences Lab (Huawei) \And  CITI, T\'el\'ecom SudParis \And CMAP, Ecole Polytechnique } ]

\begin{abstract}
In this paper, we develop a new algorithm, Annealed Skewed SGD - \Algo\ - for training deep neural networks (DNNs) with quantized weights. First, we formulate the training of quantized neural networks (QNNs) as a smoothed sequence of interval-constrained optimization problems. Then, we propose a new first-order stochastic method, \Algo, to solve each constrained optimization subproblem. Unlike algorithms with active sets and feasible directions, \Algo\ avoids projections or optimization under the entire feasible set and allows iterates that are infeasible. The numerical complexity of \Algo\ is comparable to existing approaches for training QNNs, such as the straight-through gradient estimator used  in BinaryConnect, or other state of the art methods (ProxQuant, LUQ).
We establish convergence guarantees for \Algo\ (under general assumptions for the objective function). Experimental results show that the \Algo\ algorithm performs better than or on par with state of the art methods in classical benchmarks.
\end{abstract}

\section{Introduction}
The use of deep neural networks (DNNs) on computing hardware such as mobile and IoT devices with limited computational and memory resources is becoming increasingly important. This has led to a growing area of research focused on reducing the model size and inference time of DNNs; in this area, the overall goal is to keep the loss of accuracy below an acceptable level compared to floating-point implementations. These methods include, for example, model pruning, neural architecture search, novel efficient architecture design, and low-rank decomposition.
In this work, we focus on network quantization, where weights and activations are quantized to lower bit widths, allowing for efficient fixed-point inference and reduced memory bandwidth usage; see, for example, \cite{Courbariaux2015,jacob2018quantization,darabi2018bnn+,choukroun2019low,deng2020model,qin2020binary,bhalgat2020lsq, chmiel2021logarithmic} and references therein. Quantized neural networks (QNNs) have attracted many research efforts. Nevertheless, the challenge of closing the accuracy gap between full-precision and quantized networks remains open, especially for extremely low-precision arithmetics (e.g. binary). The task of learning a quantized neural network (QNN) can be formulated as minimising the training loss with quantization constraints on the weights, i.e.,
\begin{equation}
\label{eq:optimization-problem}
\min_{\param \in \mathcal{Q}} \objfunc(\param) \,, \, \objfunc(\param)= \PE_{(x, y) \sim p_{\operatorname{data }}}[\ell(f(x, \param), y)],
\end{equation}
where $\mathcal{Q} \subset \rset^d$ is the set of quantization levels, $d$ is the number of parameters (network weights and biases), $\ell$ is the training loss (e.g. the cross-entropy or square loss), $f(x,w)$ is the DNN prediction function, $p_{\operatorname{data}}$ is the training distribution. The quantization constraints in the above program make it an extremely difficult task: the underlying optimization problem is non-convex, non-differentiable, and combinatorial in nature. Optimization of smooth functions of integer valued variables (and even quadratic ones like the max-cut problem in graph theory) is known to be NP-hard \citep{garey1980computers}.  The challenge is to find algorithms that can produce a sensible approximate solution with a manageable computational effort.
Inspired by mixed-integer nonlinear programming (MINLP) problems, several approaches using geometric, analytic, and algebraic techniques have been proposed to transform the discrete problem into a continuous problem. Examples include the use of global or concave optimization formulations, semidefinite programming, and spectral theory (see e.g. \cite{mitchell1998interior,bussieck2003minlplib,horst2013global,beck2000global,murray2010algorithm}).
However, these types of approaches are doomed to fail in the NN context because the number of parameters is several orders of magnitude larger than for classical MINLP problems.

For large training data sets and number of variables $d$, stochastic gradient-based (first-order) methods for finding minimizers of \eqref{eq:optimization-problem}  are often the only manageable option. Several methods have been proposed which transform the  loss function \eqref{eq:optimization-problem}  into a differentiable surrogate (with possibly an additional penalty term) to "favor" quantized solutions.
The general approach is to introduce real-valued "latent" weights $\param \in \rset^d$ from which the quantized weights  are generated; in the binary case, it is classical to use the $\operatorname{\sign(\cdot)}$ function or a differentiable surrogate thereof.
The simplest method, called BinaryConnect (BC) \citep{Courbariaux2015}, is based on straight-through estimators (STE) that ignore the sign conversion in computing the gradient with respect to the latent weights $\param$.
BC reaches state-of the art performance on elementary classification tasks and is still a competitive baseline method for more sophisticated problems. Extensions of STE has also been used for more general QNN by \cite{chmiel2021logarithmic, sun2020ultra, choi2018bridging, wang2019learning}. 

However, despite its success in NN inference, the STE method does not rely on solid theoretical groundings and may be shown to fail  on simple low-dimensional benchmarks - even with convex objective functions; see \cite{bai2018proxquant} and Section 5.3. We discuss this method and its recent improvements in the paragraph on related works (see below).
% Improvements of this method have been recently ivnvestigatedMany improvements have been brought recently (see below in the paragraph on related works). %A very elegant connection with mirror descent (MD) methods has been investigated in \cite{ajanthan2021mirror} - in particular entropic descent algorithm -.
%Classical MD requires the constrained set to be convex, whereas the quantization set is discrete, which has lead the authors to introduce \emph{time-varying mirror maps} associated to a monotonically increasing annealing parameter; see \cite{ajanthan2019proximal}.

\textbf{Contributions}
\begin{itemize}[leftmargin=*,itemsep=0pt]
\item We replace the discrete optimization \eqref{eq:optimization-problem} by an annealed sequence of differentiable inequality constraints that converges to \eqref{eq:optimization-problem} when the annealing parameter goes to $0$.
\item We use a novel first-order algorithm proposed in \cite{muehlebach2021constraints} to solve the relaxed subproblems in the annealed sequence, leading to \Algo.  Unlike classical constrained optimization algorithms, including the projection method or sequential quadratic programming \citep{gill2012sequential}, this approach relies exclusively on local approximations of the feasible set. This local approximation includes only the active constraints, and is guaranteed to be a convex polyhedron even if the underlying constraint set is non convex.  This makes the resulting algorithm easy to implement and also ensures that the descent is not stopped as soon as a new constraint is violated. 
\item We show how \Algo\ can be applied to train QNN. The complexity of the resulting algorithm is similar to that of BC or LUQ~\citep{Courbariaux2015, chmiel2021logarithmic} and ProxQuant \citep{bai2018proxquant}. Our algorithm uses high precision latent weights and uses classical backpropagation to evaluate the gradients.  
\item We provide  convergence guarantees for \Algo. We stress that, as opposed to \cite{muehlebach2021constraints}, no convexity assumption on the objective function or the feasible set is made.

\item We evaluate the performance of \Algo\ on classical computer vision datasets using ConvNets and ResNets. Our experiments show that QNNs trained with \Algo\ achieve accuracy very close to that of their floating-point counterparts, and  outperform or are on par with  comparable baselines.
\end{itemize} 
\paragraph{Related works}
\label{sec:related-works}
We focus on BNN and QNN that replace floating-point multiplication and addition operations with efficient fixed-point arithmetic. We do not consider algorithms that use low-bit computations at the learning stage; see \cite{sakr2018per, chen2020statistical}.
%In recent years, many approaches have been proposed to train QNNs while minimizing performance loss.
Given the abundance of works, it is impossible to give complete references. We focus mostly on methods used in our benchmarks.

\textbf{Binary NN:} The first attempt to train BNN is  BinaryConnect (BC) \citep{Courbariaux2015,hubara2016binarized} which is the first algorithm to implement Quantization Aware Training (QAT); see  \citep{gholami2021survey, zhao2020review, guo2018survey, nagel2021white} and the references therein. BC uses full precision latent weights. On the forward path, the latent weights are binarized. On the backward path, classical backpropagation is applied to update the latent weights, using a differentiable proxy of the binarization function in the gradient calculation. The most common implementation uses the identity proxy, resulting in the straight-through estimator (STE). 
%A clip function is introduced to preserve the range of latent weights and avoid divergence. 
Although the neural network parameters are highly compressed (and quantization errors can be large), the BC-STE estimator and its numerous recent improvements perform satisfactorily in many benchmarks and have become a de facto standard; see ~\cite{hu2018hashing,faraone2018syq,le2021adaste, anderson2018high}. 

%Some newer methods bypass the use of latent weights. \cite{helwegen2019latent} updates binary weights directly with a flipping rule based on an exponential moving average of gradients computed by backpropagation. Such methods have recently been successfully used in \cite{laydevant2021training} with equilibrium propagation instead of backpropagation. Although these methods give reasonable results on standard benchmarks, they do not have strong theoretical guarantees. It is easy to find counterexamples where BinaryConnect or BinaryRelax (phase 2 \citep{yin2018binaryrelax}) do not converge.

ProxQuant (PQ) \citep{bai2018proxquant}, Proximal Mean-Field (PMF) \citep{ajanthan2019proximal}, Mirror Descent
(MD) \citep{ajanthan2021mirror}, and Rotated Binary Neural Networks (RBNN) \citep{lin2020rotated} formulate the task of training BNNs as a constrained optimization problem and discuss different methods to generate binary weights from real-valued latent weights. All of these methods have in common that they use gradual annealing of the conversion mapping, in the sense that, unlike BC and its variants, the latent weights are not projected onto a finite set of quantization values in the forward path. Instead, a force is applied to gradually push the latent weights to the quantization constraints, in a manner reminiscent of homotopy methods for solving nonlinear systems or penalty barrier in nonlinear optimization.

\textbf{BNN as Variational Inference (VI):} Training binary neural networks can also be approached with VI; see among others~\cite{raiko2015techniques,peters2018probabilistic,roth2019training}.
Instead of optimizing  binary weights, the parameters of Bernoulli distributions are learned using the VI Bayesian learning rule; see e.g.~\cite{khan2021bayesian}. Even if unbiased estimators of the ELBO are available, classical methods like MuProp \citep{gu2016muprop} or REINFORCE with variance-reduction baselines \citep{mnih2014neural} have a prohibitively high variance. The use of  Gumbel-Softmax (GS) trick \citep{jang2016categorical,maddison2017concrete} has been advocated in \cite{meng2020training}, but as noted in \cite[Section~4]{shekhovtsov2021bias} there is an issue in the implementation which paradoxically enables the training. The connections between STE algorithms and their many variants - including MD - and VI methods are further discussed  in \cite{shekhovtsov2021reintroducing}.

\textbf{Quantized NN:} The STE estimator is easily adapted to QNN by adding a projection step onto the set of quantization levels in the forward pass \citep{zhou2016dorefa}; see \citep{choi2018bridging, sun2020ultra, chmiel2021logarithmic} and the references therein.
%Early works on STE for QNN report significant accuracy loss relative to full-precision models \citep{ zhou2016dorefa}.
To mitigate performance loss reported in early work from \cite{zhou2016dorefa}, a number of attempts has been proposed. One possible way is to increase the NN size \citep{zagoruyko2016wide}, or the number of channel for convolution layers \citep{mishra2017wrpn, mcdonnell2018training}.  Knowledge distillation has also been considered with some success \citep{mishra2017apprentice}. A teacher network
(typically very large \citep{liu2020reactnet} and trained in full-precision) is employed to help the QNN training (the student network). 

In QNN, the choice of the quantizer and the normalization of the weights (at each layer) play a key role. Many works have been devoted to the design of non-uniform or statistical (distribution dependent) quantizers; see  \citep{banner2018scalable, hou2018loss, bhalgat2020lsq, liang2021pruning, fournarakis2021hindsight, zhou2017incremental, zhou2018adaptive} and the references therein. Statistical quantizers are often more efficient, but they are more complex to implement and often require fine tuning  \citep{zhang2021differentiable}.
%\citep{gholami2021survey, zhao2020review, guo2018survey, nagel2021white} of quantization methods (Quantization Aware Training or Post Training Quantization), scaling and granularity ideas (layerwise, chanelwise, groupwise,\dots) were developed for QNN.
%\eric{à remettre en ordre}

A number of works have considered formulating the quantization problem as an optimization problem \citep{ li2017training, li2016ternary, zhu2016trained, carreira2017model, leng2018extremely, polino2018model}, but the proposed methods rely on  assumptions which may not hold for deep neural networks \citep{guo2018survey}. In \cite{moons2017minimum, yang2017designing,esser2015backpropagation}, the QNN training is tackled as an energy efficiency problem, whereas \cite{gong2019differentiable} propose a Differentiable Soft Quantization (DSQ) to efficiently train QNN.
%\tcr{Note that  several works deal with fixed-arithmetic during the training; see, among others, \cite{chmiel2021logarithmic, banner2018scalable, kim2020position, wang2018training, micikevicius2017mixed, wu2018training}}.

\textbf{Activation function Quantization:} We have so far described the quantization of the network weights. But an efficient implementation also requires the quantization of the activation functions. For BNN, \citep{kim2016bitwise, hubara2016binarized, Rastegari2016} proposed to use  $\operatorname{sign(\cdot)}$ function, but this approach significantly affects the performance. 
More complex quantization schemes have been considered in \cite{choi2018bridging} alleviating performance degradation. Hybrid formats FP8 \citep{wang2018training} or INT8 \citep{wiedemann2020dithered, banner2018scalable} were successfully employed to achieve a low precision training. Recent works have proposed to jointly optimize the quantization parameters (of weights and activations) and the weights parameters. This task can be done by modifying the learning loss or by minimizing the quantization error \citep{zhu2016trained, zhang2018lq, li2019additive}.

\section{Algorithm derivation}\label{sec:alg_dev}
In this section we first introduce the \citet{muehlebach2021constraints} (MJ) algorithm for smooth constrained optimization, initially proposed in a convex setting. We describe the algorithm in full generality and then show how to adapt the MJ algorithm to the QNN setting.
\paragraph{The MJ algorithm} Consider  the following optimization problem:
\begin{equation}\label{eq:constr_prob}
  \min_{\param \in C} \objfunc(\param), \quad C = \{\param \in \rset^d : \ineqfunc(\param) \geq 0 \}\, ,
\end{equation}
where $\objfunc: \rset^d \rightarrow \rset$ denotes the objective function, $\ineqfunc: \rset^d \to \rset^{n_g}$ define the inequality constraints.
We assume that the feasible set $C$ is non-empty and compact and that the functions $\objfunc$ and $\ineqfunc$ are continuously differentiable. We stress that neither $\objfunc$ nor $C$ are assumed to be convex. Standard solutions to find a local minimizer of~\eqref{eq:constr_prob} use either a projected gradient descent algorithm or ``non-linear'' projection like mirror descent. However, $C$ might have a complicated form, in which case computing the projection on $C$ might require to solve a non-trivial  optimization algorithm in itself (and may fail to be properly defined). The basic idea behind \cite{muehlebach2021constraints}'s proposal is to ``skew'' the search direction in order to force the algorithm to find a minimizer of \eqref{eq:constr_prob} without constraining the sequence $(\param_k)_{k \in \nset}$ to the feasible set. For any $\param \in \rset^d$, define by $\ActiveSet(\param)$ the set of active constraints
\begin{equation}
\label{eq:active-constraibts}
\ActiveSet(\param) = \{ i \in \{1,\dots,n_g\}, g_i(\param) \leq 0\} \eqsp.
\end{equation}
Under mild assumptions (basically, \cite{muehlebach2021constraints} assume that Mangasarian Fromowitz constraint qualification conditions hold everywhere and not on the feasible set only) the tangent and normal cones of $C$ at $\param \in C$ are given by:
\begin{align}
\label{eq:tangent-cone}
T_C(\param)&= \{ v \in \rset^d, \nabla g_i(\param)^\top v \geq 0, \text{for all} \, i \in \ActiveSet(\param) \} \, ,\\
\label{eq:normal-cone}
N_C(\param) &= \{ -\sum\nolimits_{i \in \ActiveSet(\param)} \lambda_i \nabla \ineqfunc_i(\param), \lambda_i \in \rset_+  \} \eqsp.
\end{align}
Moreover, the Karush-Kuhn-Tucker (KKT) conditions hold \cite[Theorem~7.2.9]{borwein2006convex}: if $\param^*$ is a local minimizer of \eqref{eq:constr_prob}, then $\param^* \in \cZ$, where
\begin{equation}\label{eq:critical_set}
  \cZ := \{ \param\in C : 0 \in - \nabla \objfunc(\param) - N_{C}(\param)\} \, .
\end{equation}
The MJ algorithm \citep{muehlebach2021constraints} generates iterates in $\rset^d$ as follows:
\begin{align}\label{eq:alg1}
    \begin{cases}
        \param_{k+1} = \param_k + \gamma_k v_k \quad \\
        v_{k} = \argmin\nolimits_{v \in V_{\alpha}(\param_k)} (1/2) \norm{v + \nabla \objfunc(\param_k) }^2 \, ,
    \end{cases}
\end{align}
where $(\gamma_k)$ is a non-increasing sequence of positive step sizes, $\alpha > 0$ is an hyper-parameter, and the sets $V_{\alpha}(\param)$ are defined as:
\begin{equation}\label{eq:def_valph}
       V_{\alpha}(\param) = \{ v \in \rset^d :  \nabla \ineqfunc_i(\param)^\top v \geq - \alpha \ineqfunc_i(\param) \textrm{ for all  } i \in \ActiveSet(\param)  \} \, .
 \end{equation}
If $\param \in C$ and $i \in \ActiveSet(\param)$, then $g_i(\param) =0$ and thus $V_\alpha(\param)$ reduces to $T_{C}(\param)$. The set  $V_{\alpha}(\param)$ can be considered as an extension of the tangent cone "outside" of the feasible set. Note also that $V_{\alpha}(\param)$, for all $\param \in \rset^d$, is a convex polyhedron whose construction includes only the active constraints.

By construction, whenever $\ineqfunc_i(\param_k) \leq 0$,
 $\nabla \ineqfunc_i(\param_k)^\top v_k \geq - \alpha \ineqfunc_i(\param_k)$. Thus, in Eq.~\eqref{eq:alg1}, the velocity $v_k$  is chosen to match the unconstrained gradient flow $-\nabla \objfunc(\param_k)$ as closely as possible, subject to the velocity constraint $v_k \in V_{\alpha}(\param_k)$ (this is illustrated on a simple example in Figure~\ref{fig:vectorfield}, for different values of $\alpha > 0$).
\begin{figure}
    \centering
    \includegraphics[scale=0.25]{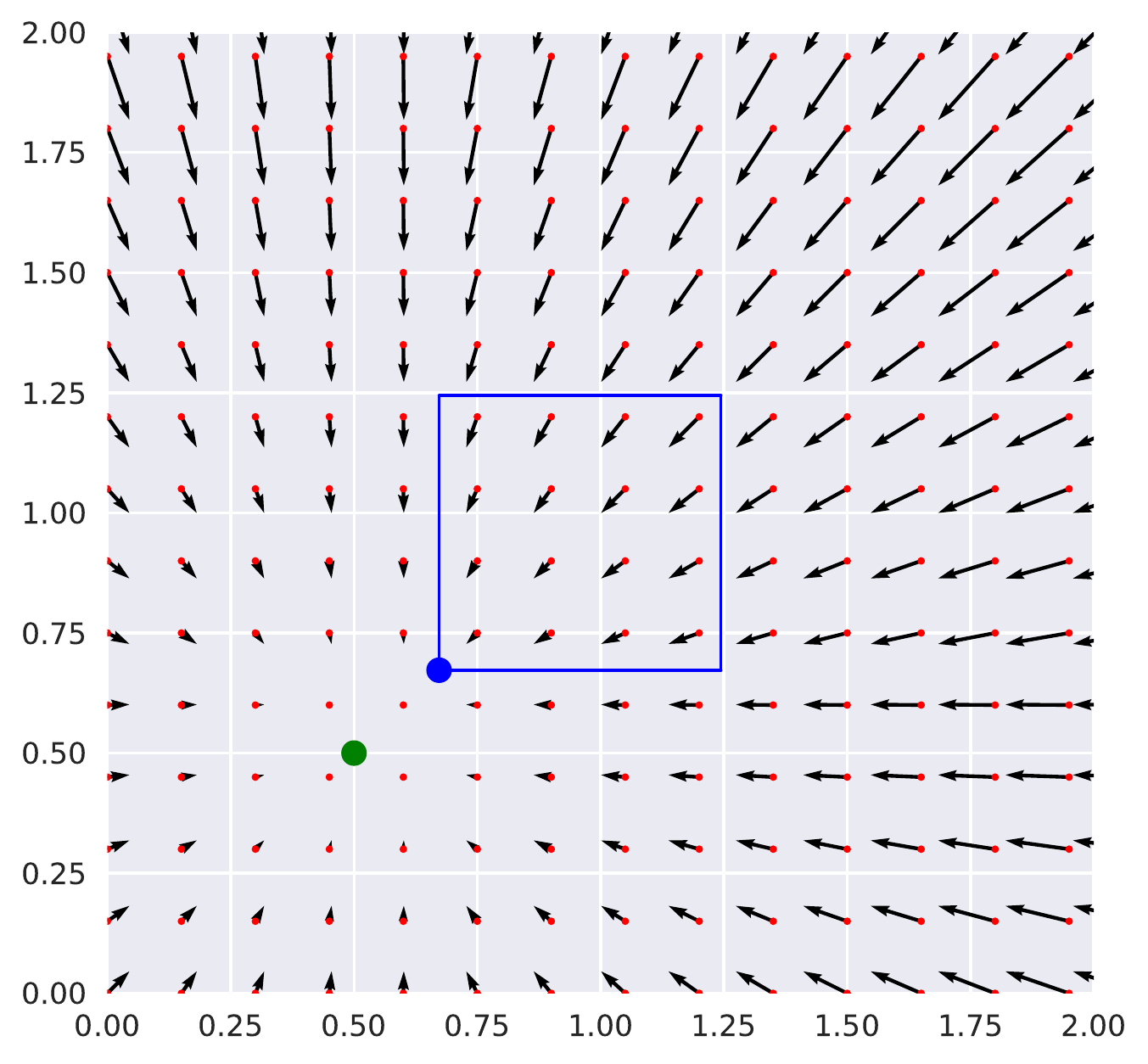}
    \includegraphics[scale=0.25]{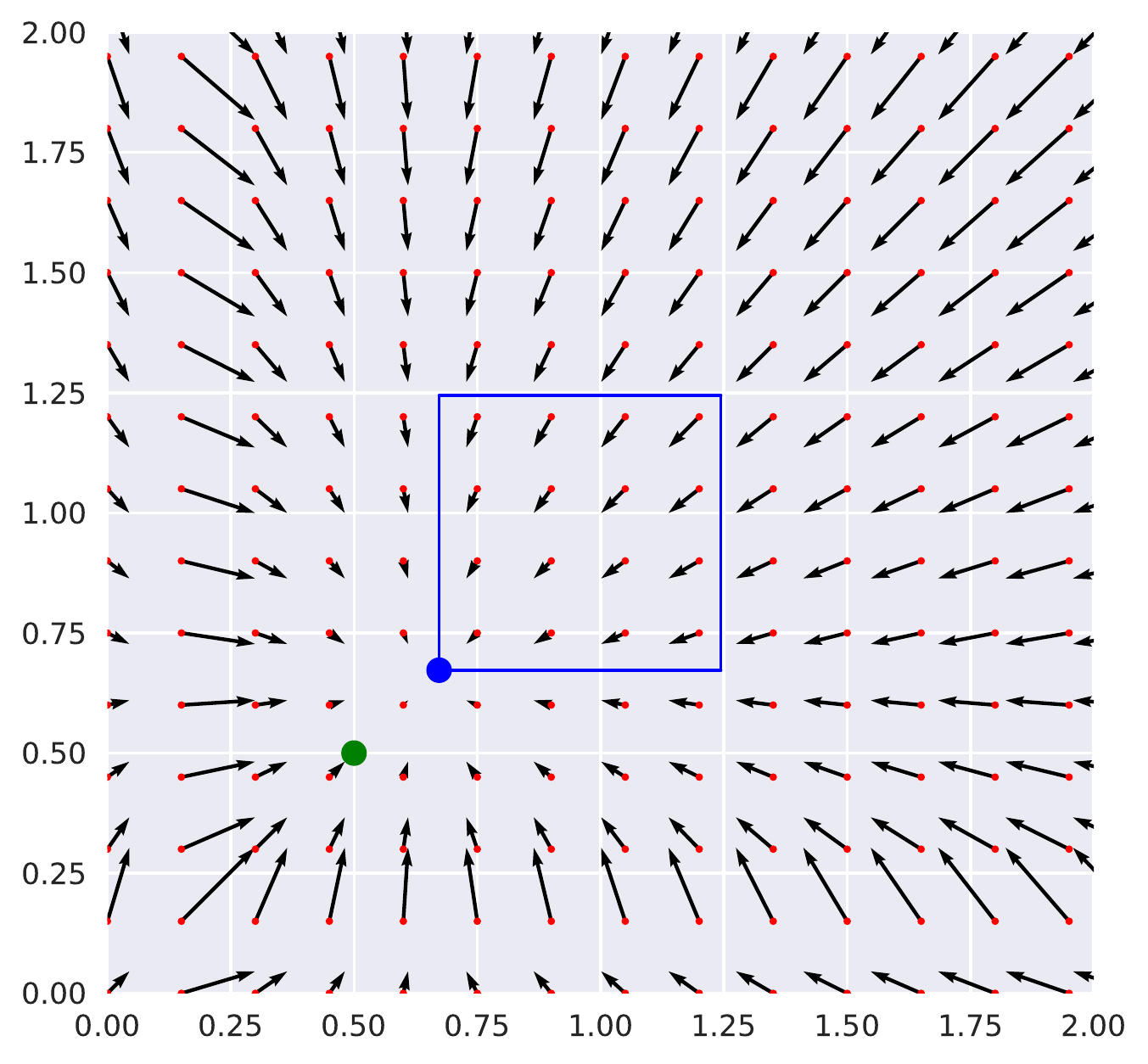}
    \caption{The vector field of velocities for $\epsilon=0.3$, and $\alpha=0.1$ (left panel) or $\alpha=1.0$ (right panel). Here, $\objfunc(\param^1,\param^2) = ((\param^1-0.5)^2+(\param^2-0.5)^2)/3 $ and $\ineqfunc(\param^1,\param^2)^\top= (\epsilon - ((\param^1)^2-1)^2, \epsilon - ((\param^2)^2-1)^2)$. The border of the set of constraints is shown in blue, and the minimizer of the constrained and
unconstrained optimization problems are shown with blue and green dots, respectively.}
    \label{fig:vectorfield}
\end{figure}
A striking difference from the classical projected gradient algorithm is that the MJ approach is based on a local approximation of the feasible set. This local approximation includes only the active constraints and is guaranteed to be a convex polyhedron even if the underlying feasible set is not convex. In ``classical'' constrained optimization algorithms, constraints are typically handled by direct reference to positions, meaning that the iterates $\param_k$, for all $k \geq 0$, must lie in the constraint set $C$.

\paragraph{\Algo\ description}
In \cite{muehlebach2021constraints} the convergence of the MJ algorithm was proven under the condition that the function $\objfunc$ and the set $C$ are convex. We now adapt this algorithm to the QNN problem, removing the requirement that $\objfunc,C$ are convex and, furthermore, replacing $\nabla \objfunc$ by a mini-batch stochastic gradient.

We consider $\objfunc$, the training loss, written as $\objfunc(\param) = 1/N\sum_{j=1}^N \objfunc_j(\param)$, where $N$ is the size of the training set, and $\objfunc_j$ is the loss associated with the $j$-th observation.

We relax the quantization constraints $w^i \in \mathcal{Q}$, $i \in \{1, \dots, d\}$, to a sequence of "smoothed" interval constraints. The set of quantization values $\mathcal{Q}$ is defined coordinate wise: $\{c^i_1,\dots,c^i_{K^i}\}$. We assume for full generality a different scalar quantizer for each coefficient; we typically use different scalar quantizers for each layer of the NN (but the same quantizer for the coefficients in the same layer).
For $\paramscal \in \rset$ such that $\paramscal \in [c^i_1, c^i_{K^i}]$, we define
\[
\phi^i(\paramscal) = (\paramscal - c_{Q^i(\paramscal)}^i)^2(\paramscal - c_{Q^i(\paramscal)+1}^{i})^2
\]
where $Q^i(\paramscal)$ is the unique index satisfying $c^i_{Q^i(\paramscal)}\leq \paramscal < c^{i}_{Q^i(\paramscal)+1}$. If $\paramscal < c^i_1$ we define $\phi^i(\paramscal) =(\paramscal-c^i_1)^2$, and if $\paramscal > c^i_{K^i}$ we define $\phi^i(\paramscal) =(\paramscal-c^i_{K^i})^2$. 
For any $\epsilon \in \ccint{0,1}$ and $\paramscal \in \rset$, define $\psi^i_{\epsilon}(\paramscal):= \epsilon - \phi^i(\paramscal)$ and consider the feasible set
\begin{equation}
\label{eq:definition-C-epsilon}
C_{\epsilon} = \{ \param \in \rset^d : \forall \, i \in \{1, \dots, d\}, \ineqfunc_{\epsilon,i}(\param):= \psi^i_{\epsilon}(\param^i) \geq 0\} \eqsp.
\end{equation}
For each $\epsilon \in \ooint{0,1}$, we  consider the constrained optimization problem $\mathcal{P}_\epsilon : \min_{\param \in C_{\epsilon}} \objfunc(\param)$. It is easily seen that $\bigcap_{\epsilon > 0} C_{\epsilon}= \mathcal{Q}$, recovering  the constraints of the QNN problem. We therefore define a decreasing sequence $(\epsilon_n)_{n \geq 0}$ of numbers in $[0,1]$ such that $\lim_{n \to \infty} \epsilon_n= 0$ and solve (approximately) the sequence of problems $(\mathcal{P}_{\epsilon_n})_{n \in \nset}$.

Here we must notice that the set $V_{\alpha}(\param)$ is empty if and only if there is $ 1\leq i \leq d$ such that $\param^i =(c_{Q^i(\paramscal)}+c_{Q^i(\paramscal)+1})/2$. For such a point, there is no
"best" direction, so we chose it arbitrarily by specifying that the $i$-th coordinate must go to the right (see the following clipping convention). A symmetric choice prescribing a left direction is also possible.  Moreover, since the set of such $\param$ is of Lebesgue-measure
zero, we can hope that we will never stumble upon such a point (this is further guaranteed by the fact that the iterates converge to $C_{\epsilon}$, implying that such points asymptotically never occur).

We denote by $\cZ_{\epsilon} := \{ \param \in C_{\epsilon}: 0 \in - \nabla \objfunc(\param) - N_{C_{\epsilon}}(\param)\}$, the set of KKT points of $\mathcal{P}_{\epsilon}$. Notice that any element of $N_{C_{\epsilon}}(\param)$ can be written as
$(-\lambda^1 \psi'_{\epsilon}(\param^1), \dots,  -\lambda^d  \psi'_{\epsilon}(\param^d))$, with $\lambda^i \geq 0$ and $\lambda^i \neq 0$ only if $\psi_{\epsilon}(\param^i) = 0$.
Therefore, $\param \in \cZ_{\epsilon}$ if and only if for every $i \in \{1, \dots, d\}$,
\begin{align}
    &\nabla_i \objfunc(\param) = 0 &\textrm{if }\psi_{\epsilon}(\param^i) >0 \\
    &\textrm{ and } \sign(\nabla_i \objfunc(\param)) = \sign(\psi'_{\epsilon}(\param^i)) &\textrm{if } \psi_{\epsilon}(\param^i) = 0\, ,
\end{align}
where for $i \in \{1,\dots,d\}$, $\nabla_i \objfunc(\param)$ is the partial derivative of $\objfunc(\param)$ \wrt\ $\param^i$.
In this setting, the set of active constraints and $V_{\alpha}$ can be written down as:
\begin{align}\textstyle
  \ActiveSet_{\epsilon}(\param) &= \{ i \in \{1, \dots, d\} : \psi_{\epsilon}(\param^i) \leq 0 \} \, , \\
     V_{\epsilon,\alpha}(\param) &= \{ v \in \rset^d: v^i \psi_{\epsilon}'(\param^i) \geq - \alpha \psi_{\epsilon}(\param^i) \textrm{ for } i \in \ActiveSet_{\epsilon}(\param)\} \, .
\end{align}
Let $\param$ be such that $\param^i \ne (c_{Q^i(\paramscal)}+c_{Q^i(\paramscal)+1})/2$ for all $i \in \{1,\dots,d\}$. For $u \in \rset^d$, denote by $s_{\epsilon,\alpha}(u,\param) = \argmin_{v \in V_{\epsilon,\alpha}(\param)} 1/2 \norm{v + u}^2$. This problem admits an explicit solution: $[s_{\epsilon,\alpha}(u,\param)]^i = -u^i$ if $\psi_{\epsilon}(\param^i) > 0$  or  $-\psi_{\epsilon}'(\param^i)u^i \geq - \alpha \psi_{\epsilon}(\param^i) \geq 0$
and $[s_{\epsilon,\alpha}(u,\param)]^i= -\alpha\psi_{\epsilon}(\param^i)/\psi_{\epsilon}'(\param^i)$, otherwise.
Note that when $\param^i \rightarrow (c_{Q^i(\paramscal)}+c_{Q^i(\paramscal)+1})/2$, the quantity $\psi_{\epsilon}'(\param^i)$ converges to zero, and thus $[s_{\epsilon,\alpha}(u,\param)]^i $ might diverge to infinity. To alleviate this problem, we furthermore clip the update. For $(a,b) \in \rset \times \rset_{+}$, define $\clip(a,b)$ equal to $a$ if $|a|\leq b$ and to $ b\sign(a)$ otherwise. Choose $M_{\epsilon,c} >0$ and let $s^c_{\epsilon,\alpha}$ be defined for $i \in \{1, \dots, d\}$, $\param^i \ne (c_{Q^i(\paramscal)}+c_{Q^i(\paramscal)+1})/2$, by: $[s^c_{\epsilon,\alpha}(u,\param)]^i = $
\begin{equation}
    \begin{cases}
       -u^i \quad \textrm{ if } \psi_{\epsilon}(\param^i) > 0 \textrm{ or } -\psi_{\epsilon}'(\param^i)u^i \geq - \alpha \psi_{\epsilon}(\param^i) \geq 0\, ;\\
       \clip(-\alpha\psi_{\epsilon}(\param^i)/\psi_{\epsilon}'(\param^i), M_{\epsilon, c}) \quad \quad \quad \quad \textrm{ otherwise}\, .
     \end{cases}
\end{equation}
We set by convention $[s^c_{\epsilon,\alpha}(u,\param)]^i = M_{\epsilon, c}$ if $\param^i = (c_{Q^i(\paramscal)}+c_{Q^i(\paramscal)+1})/2$. For given $\alpha, \epsilon$,  \Algo \, is summarized in Algorithm~\ref{alg:detailed}.
\begin{algorithm}
\caption{\Algo\ algorithm}\label{alg:detailed}
\KwData{sequence of step sizes $(\gamma_k)$; size of the mini-batch $N_b \leq N$; $\param_0 \in \mathbb{R}^d$}
\For{k=1, \dots, T}{
Sample a minibatch of $N_b$ observations $\{j_1, \dots j_{N_b} \}$ in $\{1, \dots, N\}$; \\
Compute the Stochastic Gradient $\widehat{\nabla} \objfunc(\param_k) =1/N_b \sum_{i=1}^{N_b} \nabla \objfunc_{j_i}(\param_k)$; \\
Compute the update direction $v_k = s_{\epsilon, \alpha}^c( \widehat{\nabla} \objfunc (\param_k), \param_k)$; \\
Update the parameter $\param_{k+1} = \param_k + \gamma_k v_k$.
}
\end{algorithm}
Under mild assumptions, we establish the convergence of \Algo. Consider the following assumptions.
\begin{assumption}
\label{assum:sum-objective}
For $j \in \{1, \dots, N\}$, the function $\objfunc_j$ is $d$-times continuously differentiable and has $M_{\objfunc_j}$-Lipschitz continuous gradients.
\end{assumption}
%%% dans cette version on spécifie $\psi_{\epsilon}$
%\begin{assumption}
%\label{assum:constraint}
%The function $\psi_{\epsilon}$ is continuously differentiable.
%\end{assumption}
\begin{assumption}
\label{assum:stepsize}
The stepsizes $(\gamma_k)_{k \geq 0}$ are positive, $\sum_{j=0}^\infty \gamma_k = \infty$ and $\sum_{j=0}^\infty \gamma_k^2 < \infty$.
\end{assumption}
%%% on doit pouvoir trouver une condition !
%\begin{assumption}\label{assum:sard}
%The set $\objfunc(\cZ)$ is of empty interior.
%\end{assumption}
Notice that \Cref{assum:stepsize} holds for $(\gamma_k)$ of the form $(1/k^{\delta})$, with $\delta \in (1/2, 1]$.
\Cref{assum:sum-objective} will ensure the stability of \Algo\, (i.e. the iterates are bounded with probability one). Moreover, \Cref{assum:sum-objective} implies that $\objfunc(\cZ_{\epsilon})$ is of empty interior, as a consequence of the Sard's theorem (see Lemma~\ref{lm:sard_part}).
\begin{theorem}\label{th:conv_alg}
Assume \Cref{assum:sum-objective}-\Cref{assum:stepsize} and $0  <  \epsilon \leq \inf_{1 \leq i \leq d} \inf_{1 \leq j \leq {K^i}} |c_j^i - c_{j+1}^i|^4/16$, where $\{c_j^i\}$ are the quantization levels. Then, $\objfunc(\param_k)$ converges and $\lim_{k \to \infty} \dist(\param_k,\cZ_{\epsilon}) = 0$ almost surely.
\end{theorem}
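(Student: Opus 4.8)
The plan is to treat Algorithm~\ref{alg:detailed} as a stochastic approximation scheme whose mean dynamics is a differential inclusion on $\rset^d$, and to run the argument in three stages: (i) stability, i.e.\ $\sup_k\|\param_k\|<\infty$ a.s.\ together with $\dist(\param_k,C_{\epsilon})\to 0$; (ii) identification of the scheme as an asymptotic pseudotrajectory of a set-valued dynamical system; (iii) a Lyapunov argument with $\objfunc$ as Lyapunov function and $\cZ_{\epsilon}$ as critical set. For (i) I would take the candidate $W(\param)=\dist(\param,C_{\epsilon})^2=\sum_{i=1}^d\dist(\param^i,\{\paramscal:\phi^i(\paramscal)\le\epsilon\})^2$ and show it is an almost-supermartingale. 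The key point is the \emph{push-back} property of $s^c_{\epsilon,\alpha}$: if coordinate $i$ is infeasible ($\phi^i(\param^i_k)>\epsilon$, so $i\in\ActiveSet_\epsilon(\param_k)$), then $v_k^i$ points toward $\{\phi^i\le\epsilon\}$ with $|v_k^i|\ge\min(\alpha|\psi_\epsilon(\param_k^i)|/|\psi'_\epsilon(\param_k^i)|,\,M_{\epsilon,c})$; this holds \emph{deterministically} even with the stochastic gradient, because in the ``$-u^i$'' branch the selection condition $-\psi'_\epsilon(\param^i_k)\widehat\nabla_i\objfunc(\param_k)\ge-\alpha\psi_\epsilon(\param^i_k)$ itself forces $|\widehat\nabla_i\objfunc(\param_k)|\ge\alpha|\psi_\epsilon(\param^i_k)|/|\psi'_\epsilon(\param^i_k)|$. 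Using $\phi^i(\param^i)-\epsilon\gtrsim|(\phi^i)'(\param^i)|\,\dist(\param^i,\{\phi^i\le\epsilon\})$ away from the coordinate midpoints and $\phi^i(\param^i)=(\param^i-c^i_{K^i})^2$ far out, one gets $\langle\nabla W(\param_k),v_k\rangle\le-\kappa(W(\param_k))$ with $\kappa(W)>0$ for $W>0$ and $\kappa(W)\gtrsim\sqrt W$ for large $W$; together with $\|v_k\|^2\lesssim1+\|\param_k\|^2\lesssim1+W(\param_k)$ (clipping plus Lipschitz gradients, \Cref{assum:sum-objective}) this yields $\PE[W(\param_{k+1})\mid\mathcal F_k]\le W(\param_k)(1+C\gamma_k^2)-\gamma_k\kappa(W(\param_k))+C\gamma_k^2$. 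Robbins--Siegmund together with $\sum\gamma_k=\infty$, $\sum\gamma_k^2<\infty$ (\Cref{assum:stepsize}) then gives $W(\param_k)\to0$ a.s., hence both boundedness of $(\param_k)$ and $\dist(\param_k,C_{\epsilon})\to0$.

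For (ii) I would write $v_k=G_{\epsilon,\alpha}(\param_k)+U_k$ with $G_{\epsilon,\alpha}(\param):=\PE[s^c_{\epsilon,\alpha}(\widehat\nabla\objfunc(\param),\param)\mid\param]$ and $U_k$ a martingale difference; on the (a.s.\ bounded) orbit $\PE[\|U_k\|^2\mid\mathcal F_k]$ is bounded, so $\sum\gamma_k^2\PE[\|U_k\|^2\mid\mathcal F_k]<\infty$ and the piecewise-affine interpolate of $(\param_k)$ is a.s.\ a bounded asymptotic pseudotrajectory of the differential inclusion $\dot\param\in\widehat G_{\epsilon,\alpha}(\param)$, where $\widehat G_{\epsilon,\alpha}$ is the upper-semicontinuous, compact-convex-valued (Filippov-type) regularisation of $G_{\epsilon,\alpha}$. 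Its limit set $L$ is then a.s.\ internally chain transitive, and by stage (i), $L\subseteq C_{\epsilon}$.

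For (iii) I would show $\objfunc$ is a Lyapunov function for $\widehat G_{\epsilon,\alpha}$ relative to $\cZ_{\epsilon}$ on $C_{\epsilon}$. On $C_{\epsilon}$ one has $\psi_\epsilon(\param^i)=0$ for $i\in\ActiveSet_\epsilon(\param)$, so $V_{\epsilon,\alpha}(\param)=T_{C_{\epsilon}}(\param)$ and $s^c_{\epsilon,\alpha}(u,\param)$ is exactly the Euclidean projection of $-u$ onto the cone $T_{C_{\epsilon}}(\param)$; hence $\langle\widehat\nabla\objfunc(\param_k),v_k\rangle=-\|v_k\|^2\le0$. Splitting $\langle\nabla\objfunc,v_k\rangle=\langle\widehat\nabla\objfunc,v_k\rangle+\langle\nabla\objfunc-\widehat\nabla\objfunc,v_k\rangle$, taking conditional expectations, and combining with the coordinatewise description of $\cZ_{\epsilon}$ given in the text and with the fact that at $\partial C_{\epsilon}$ the push-back from the infeasible side makes the discontinuity surfaces $\{\param^i=\text{edge of }\{\phi^i\le\epsilon\}\}$ \emph{attractive} (so that along any solution staying in $C_{\epsilon}$ the corresponding coordinate slides with $\dot\param^i=0$ unless its KKT sign condition fails), one obtains that $t\mapsto\objfunc(\param(t))$ is non-increasing along solutions in $C_{\epsilon}$ and strictly decreasing off $\cZ_{\epsilon}$. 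Since $\objfunc(\cZ_{\epsilon})$ has empty interior by Sard's theorem (\Cref{lm:sard_part}), the standard result that an internally chain transitive set carrying such a Lyapunov function lies in the critical set, on which the Lyapunov function is constant, gives $L\subseteq\cZ_{\epsilon}$ and $\objfunc$ constant on $L$. Hence $\dist(\param_k,\cZ_{\epsilon})\to0$ and $\objfunc(\param_k)\to\objfunc(L)$, which is the claim.

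The hard part will be stage (iii), or more precisely the interaction of the discontinuity of $s^c_{\epsilon,\alpha}$ with the stochastic gradient: across $\partial C_{\epsilon}$ (active-set changes), across the two branches defining $s^c_{\epsilon,\alpha}$, and at the midpoints $\param^i=(c^i_{Q^i}+c^i_{Q^i+1})/2$, the mean field $G_{\epsilon,\alpha}$ is discontinuous, and near $\partial C_{\epsilon}$ the stochastic-gradient noise induces an ``inward leakage'' so that the instantaneous mean velocity need not vanish at a KKT point. The two delicate ingredients are therefore: choosing the relaxation $\widehat G_{\epsilon,\alpha}$ coarse enough that the algorithm is provably an APT of it yet fine enough that $\objfunc$ remains Lyapunov with critical set exactly $\cZ_{\epsilon}$ — which rests on the attractive sliding-mode structure at $\partial C_{\epsilon}$; and using the hypothesis $\epsilon\le\inf_{i,j}|c^i_j-c^i_{j+1}|^4/16$, whose role is to keep every midpoint outside (or on the boundary) of $C_{\epsilon}$, so that $C_{\epsilon}$ decomposes coordinatewise into disjoint — or merely touching — intervals around the quantization levels and the push-back near the midpoints has a well-defined sign. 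One likely also needs $M_{\epsilon,c}$ large enough relative to $\sup\|\nabla\objfunc\|$ on the confining region to prevent a coordinate from being driven across a midpoint into a neighbouring interval, or a separate argument that such crossings cannot recur infinitely often.
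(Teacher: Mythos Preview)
Your three-stage skeleton (stability, asymptotic-pseudotrajectory of a DI, Lyapunov plus Sard) is exactly the paper's, and your use of Sard for empty interior of $\objfunc(\cZ_\epsilon)$ matches \Cref{lm:sard_part}. The substantive difference is the choice of DI, and this is where the paper avoids precisely the difficulty you flag in stage~(iii).

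You take the mean field $G_{\epsilon,\alpha}(\param)=\PE[s^c_{\epsilon,\alpha}(\widehat\nabla\objfunc(\param),\param)]$ and Filippov-regularise it. As you correctly observe, because the projection onto $T_{C_\epsilon}(\param)$ is nonlinear in $u$, this mean field need not vanish at a KKT point (your ``inward leakage''), and proving that $\objfunc$ is Lyapunov for $\widehat G_{\epsilon,\alpha}$ with critical set \emph{exactly} $\cZ_\epsilon$ then requires a genuine sliding-mode analysis which you sketch but do not carry out. The paper sidesteps this entirely by a different decomposition: once clipping has stopped (which it does after finitely many steps a.s., by stage~(i)), one has $v_k^i=-\widehat\nabla_i\objfunc(\param_k)+\lambda_k^i\psi'_\epsilon(\param_k^i)$ with $\lambda_k^i\ge0$ and $\lambda_k^i=0$ whenever $\psi_\epsilon(\param_k^i)>0$. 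Setting $u_k^i:=-\lambda_k^i\psi'_\epsilon(\param_k^i)$ and $\eta_{k+1}:=\nabla\objfunc(\param_k)-\widehat\nabla\objfunc(\param_k)$ gives
\[
\param_{k+1}=\param_k-\gamma_k\nabla\objfunc(\param_k)+\gamma_k\eta_{k+1}-\gamma_k u_k\, .
\]
Here $\eta_{k+1}$ is the \emph{raw} gradient noise (a martingale difference, so $\sum\gamma_k\eta_{k+1}$ converges a.s.), and $(u_k)$ is bounded with cluster points in $N_{C_\epsilon}(\param_\infty)$ whenever $\param_{k_j}\to\param_\infty\in C_\epsilon$. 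The relevant DI is therefore the projected gradient flow $\dot\sy\in-\nabla\objfunc(\sy)-N_{C_\epsilon}(\sy)$, for which the Lyapunov property is a one-line computation (\Cref{lm:mfcq_lyap}): Clarke regularity of $C_\epsilon$ gives $\dot\sy(t)^\top v=0$ for $v\in N_{C_\epsilon}(\sy(t))$, hence $\tfrac{\rmd}{\rmd t}\objfunc(\sy(t))=-\|\dot\sy(t)\|^2$. The inward-leakage problem never arises, because the randomness of the projected direction is absorbed into the martingale term $\eta$ rather than averaged into the mean field. The result of \citet{dav-dru-kak-lee-19} (stated as \Cref{theo:dav_drus}) is then applied directly, with its fourth hypothesis verified via the cluster-point description of $u_k$ rather than via a Filippov construction.

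A smaller difference: the paper's stability proof (\Cref{lm:ineq_bound}) is not Robbins--Siegmund but a direct deterministic coordinate-wise argument. Since $|v_k^i|\le M:=\max(M_{\epsilon,c},\sup_j M_{\objfunc_j})$, once $\gamma_k M$ is small enough the push-back traps $\param_k^i$ in a single half-interval $[(c^i_{j-1}+c^i_j)/2,(c^i_j+c^i_{j+1})/2)$; from there a monotonicity argument (using $\sum\gamma_k=\infty$) shows $\param_k^i$ is eventually squeezed into the corresponding component of $C_\epsilon^i$. Your Robbins--Siegmund route is a reasonable alternative. The condition on $\epsilon$ is used exactly as you say, to ensure that the projection of $C_\epsilon$ on each coordinate is a disjoint union of $K^i$ intervals so that the midpoints lie outside $C_\epsilon$ and the trapping argument works.
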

Note that the condition on $\epsilon$ ensures that the projection of $C_{\epsilon}$ onto the $i$-th coordinate is a disconnected set of $K^i$ intervals.
The proof is based on a general convergence result of  \cite{dav-dru-kak-lee-19}, on asymptotic behavior of stochastic approximation of differential inclusion (DI). In our particular case, the corresponding DI is $\dot{\sy}(t) \in - \nabla \objfunc(\sy(t)) - N_{C_{\epsilon}}(\sy(t))$ (we might notice here that this DI is also the continuous-time limit of the projected gradient method). Definitions and important results on DIs and their stochastic approximations can be found in \Cref{sec:app_prel}.

The proof of \Cref{th:conv_alg} is done in several steps (see \Cref{sec:app:proofs} for complete derivations). First we prove that almost surely, the sequence of iterates $(\param_k)$ converges to $C_{\epsilon}$ (see \Cref{lm:ineq_bound}). Then we show that an update step of \Algo\, can be written as $\param_{k+1} = \param_k - \gamma_k \nabla \objfunc(\param_k) + \gamma_k \eta_{k+1} - \gamma_k u_k$, where $\eta_{k+1} = \nabla\objfunc(\param_k) - \widehat{\nabla} \objfunc(\param_k)$ and $u_k$ approximates an element of $N_{C}(\param_k)$.
We show the convergence of $\sum_{j=1}^k \gamma_j \eta_{j+1}$ in \Cref{lm:mart_conv}, and complete the proof by applying \Cref{theo:dav_drus}, which is adapted from \citet[Theorem~3.2]{dav-dru-kak-lee-19}.

\paragraph{Forward pass quantization}
For completeness, we finally describe the quantization of the activation function when \Algo\ is used to train a deep NN. During the forward pass, we employ a  round-to-nearest approach INT4 quantization methods for the activations, taken from \cite{chmiel2021logarithmic}. We make use of Statistics Aware Weight Binning (SAWB) of \citep{choi2018bridging}, which finds the optimal scaling factor that minimizes the quantization
error based on the statistical characteristics of activation distribution. As emphasized by \citep{chmiel2021logarithmic, choi2018bridging}, non-linearities of loss and activation functions make unnecessary the use of an unbiased scalar quantizer. After scaling, we use a uniform quantization (e.g., INT4): the set of quantization values $\mathcal{Q}$ is defined coordinate wise: $\{c^i_1,\dots,c^i_{K^i}\}$. In our experiments (see \Cref{sec:experiments}) both the weights and the activations are rescaled layerwise to fit the quantization interval (e.g., $[-2^3, 2^3]$ for INT4). The quantization values $\{c^i_1,\dots,c^i_{K^i}\}$ are the integer from the quantization interval (e.g., $\{-8, -7,\dots,8\}$ for INT4). After quantization, both the weights and the activation are rescaled using the scaling factor calculated layerwise.
\section{Experiments}
\label{sec:experiments}
We evaluate the performance of \Algo\ with weights quantized with 1, 2, and 4 bits.  While BNN performs well on some simple benchmarks, it lags significantly behind full precision NN on more demanding tasks. QNN with higher precision  and quantization of the activations offers a trade-off between performance and computation efficiency.
For simplicity, we refer to $[\textrm{W}x/\textrm{A}y]$ as a neural architecture with $x$-bit precision weights and $y$-bit precision activations.
Details of the implementations and complementary experiments are reported in \Cref{sec:app:numerical-results}.
In all experiments, $\epsilon$ is annealed throughout the training process during successive episodes.
Our experiments show that the initial value for $\epsilon$ is not critical. We use a logarithmic schedule. Given a fixed $\epsilon$, we run the algorithm until the test error does not improve, and then reduce it by using the last iterates as the starting point for the next round. For example, in the experiments of \Cref{tab:accuracy_cifar10} and \Cref{tab:accuracy_steps}, the initial value for $\epsilon$ is $1$, and we reduce it as $K^t$ with $K=0.88$. We can set $K$ to different values ($\frac{1}{2}$, $0.8$ were tested) as long as $K < 1$.
\subsection{1-bit quantization}
\label{subsec:binarynn}
We evaluate the performance of \Algo\ [W1/A32] on four tasks: a convex problem, a 2D toy example and two classical image classification benchmarks.
\paragraph{Convex toy example}
We compare \Algo\ , BinaryConnect~\citep{Courbariaux2015} and AdaSTE~\citep{le2021adaste} in a logistic regression problem. We generate $n=6000$ feature vectors $\{x_k\}_{k=1}^n$ of dimension $d=10$, drawn independently from the uniform distribution in $[-1,1]$. We randomly choose an optimal vector $w_*$ on the vertices of the hypercube and generate the labels as follows: $y_k \sim \operatorname{Bernoulli}(\{1+\rme^{-x_k^\top w_*}\}^{-1})$.
For completeness, we study how a SGD converges with full precision to the optimal point $w_*$ of this convex problem. All methods are trained for $25$ epochs using the SGD optimizer. The learning rate is set to $1$ and the gradients are computed on random batches of $1000$ samples. For AdaSTE, we have used the code  \footnote{https://github.com/intellhave/AdaSTE} with the hyperparameters specified in the package for annealing. %AdaSTE is a bilevel optimization instance. It avoids computing the derivative wrt $\operatorname{sign}(\cdot)$ function by some finite differences assumptions that results in a similar yet adaptive straight-through estimator.
\begin{figure}[h!]
    \centering
    \includegraphics[scale=0.45]{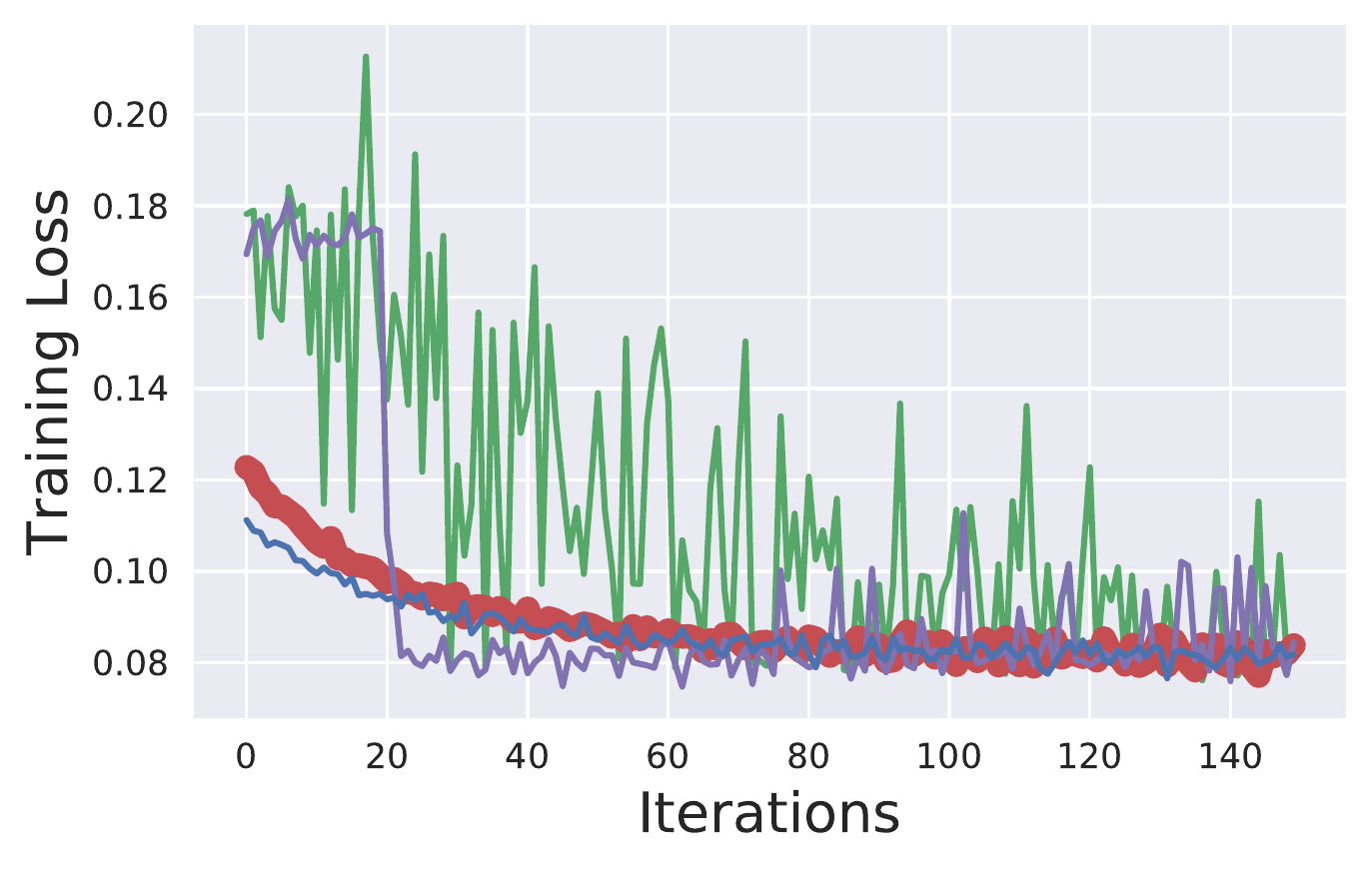}
    \caption{Training losses for the logistic regression problem with batches of size $1000$. BinaryConnect (green), \Algo\ (blue), full Precision (red), AdaSTE (purple) methods. The x-axis represents the iteration index. Red points are made artificially bigger to help visualization.}
\label{fig:loss_logistic_regression}
\end{figure}
 \Algo\ performance is on par with full precision method, while the STE variants all suffer from strong oscillations (see \cite{shekhovtsov2021bias, shekhovtsov2021reintroducing, bai2018proxquant}). \Cref{fig:loss_logistic_regression} illustrates the effects of such oscillations on the convergence . In all settings, AdaSTE converges faster than BC, but still all STE variants exhibit a larger loss compared to other methods. Additional results are reported in \Cref{sec:app:numerical-results}.
 \paragraph{Non-convex toy example}
We consider the binary classification problem on "2 moons dataset" presented in \cite{meng2020training}. The training dataset consists of 2000 samples (split into 2 moon-like clusters in 2 dimensions) and 200 test samples; see \Cref{sec:app:numerical-results}. We train a BNN with 9 neurons. In this low-dimensional environment, we can enumerate all $2^9=512$ possible binary configurations and select the best one(s).
Our method is compared with 4 different approaches: a full precision NN, BinaryConnect~\citep{Courbariaux2015}, AdaSTE~\citep{le2021adaste}, and exhaustive search. All methods are trained for $50$ epochs with logistic loss. The full precision NN is trained using the Adam optimizer \citep{kingma2014adam} with default hyperparameters, a learning rate of $0.1$, and a batch of size $100$. The BinaryConnect approach is trained using the Adam optimizer with default hyperparameters, a learning rate of $1$, and a batch of size $100$. The AdaSTE method is implemented using a learning rate of $1$. Our method uses the same parameters as the STE method, and we set $\alpha$ to $4$. For a single run, we plotted the training loss in \Cref{fig:loss_2moons}.
\begin{figure}[h!]
    \centering
    \includegraphics[scale=0.45]{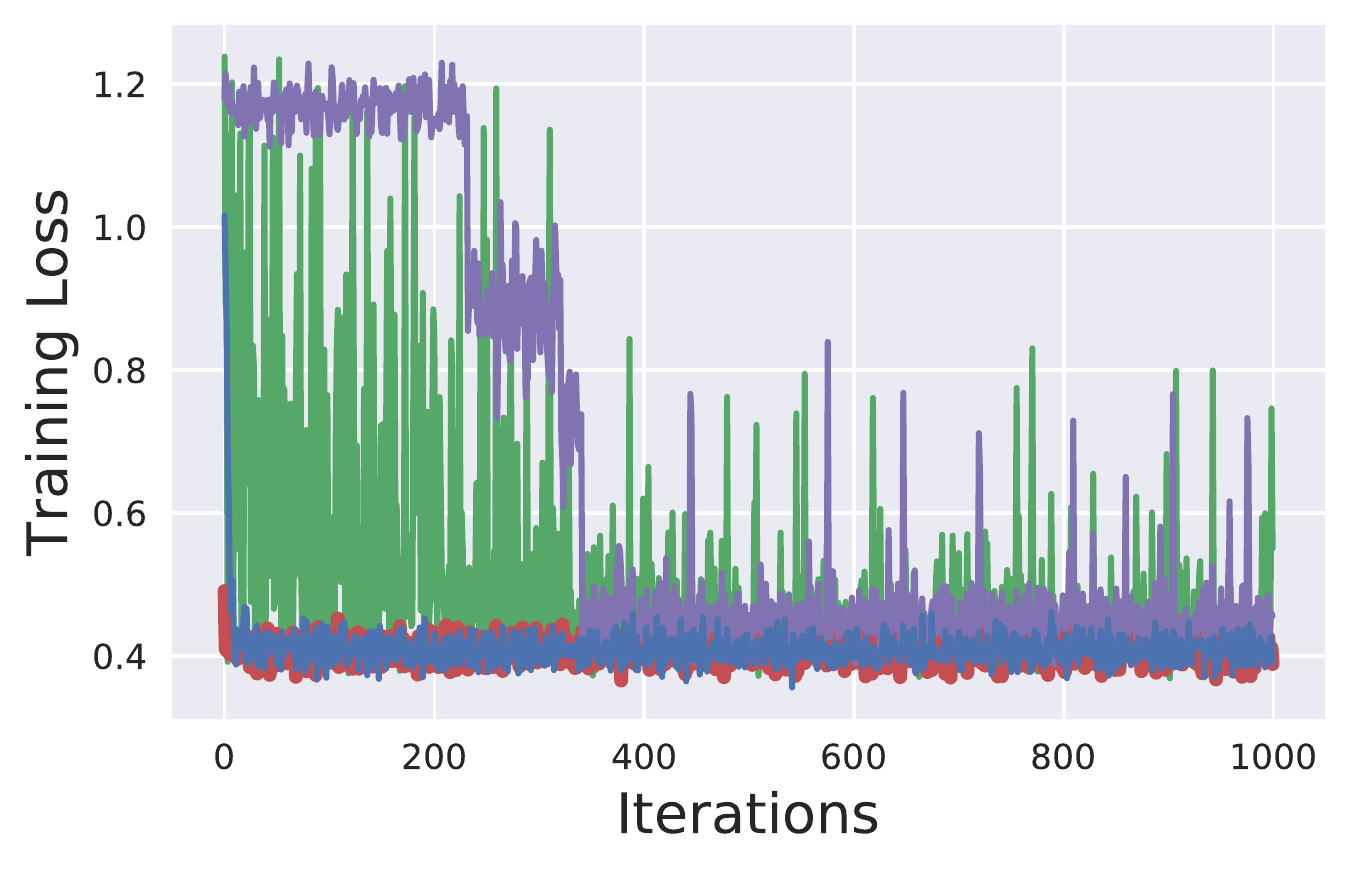}
    \caption{Training losses for the toy non-convex problem with batches of size $100$. BinaryConnect (green), \Algo\ (blue), full Precision (red), AdaSTE (purple) methods. The x-axis represents the iteration index. Red points are made artificially bigger to help visualization}
\label{fig:loss_2moons}
\end{figure}
For a fair comparison, in \Cref{tab:loss_twomoons} in \Cref{sec:app:numerical-results} we report the performance averaged on 50 random experiments of the various methods on the test set (full precision reaches a $2.045 \pm 0.005$ loss, when exhaustive search presents a $2.1$ loss, \Algo\ reaches $2.11 \pm 0.01$, AdaSTE and STE reach $2.24 \pm 0.10$ and $2.32 \pm 0.11$ respectively).

The exhaustive search shows that different configurations lead to near-optimal performance (see \Cref{fig:histogram_bruteforce} in \Cref{sec:app:numerical-results}). Here we chose the configuration that achieves the lowest loss  on the test set.  \Algo\ outperforms AdaSTE and BC.
\paragraph{Computer vision tasks}
In this section, we benchmark \Algo\ with BC \citep{Courbariaux2015,hubara2016binarized}, Mirror Descent \citep{ajanthan2021mirror}, and ProxQuant \citep{bai2018proxquant} on classical computer vision datasets. To avoid overloading the figures, the AdaSTE results are reported separately in \Cref{sec:app:numerical-results}. We also report performance with a standard full precision NN and a full precision NN projected onto the hypersphere. We compare the different methods using the same NN architecture. We do not add bias on any neuron. We introduce batch normalisation (without learning scale and bias parameters) after each layer. We emphasise that our method is generic and not specific to the classical ConvNet architecture. We have also obtained SOTA results for large ResNet architectures (see \Cref{tab:accuracy_tinyimg_imgnet}).

We use the  standard data augmentations and normalizations for all the methods. \Algo\ is implemented in Pytorch, and the experiments are run on a NVIDIA Tesla-P100 GPU. Standard multiclass cross-entropy loss is used for all experiments unless otherwise stated. We perform cross-validation of the hyperparameters, such as the learning rate, the tradeoff between constraints $\alpha$, the rate of increase of the annealing hyperparameter, and their respective schedules. The search space for tuning the hyperparameters and the final hyperparameters can be found in  \Cref{sec:app:numerical-results}. All models are fine-tuned for $100$ epochs using the Adam \citep{kingma2014adam} optimizer with dynamics of $0.9$ and $0.999$, and batch of size $100$.

The NN with full precision is trained with an initial learning rate of $0.08$. The projected full precision NN  uses a projected gradient algorithm. The same hyperparameters as the "plain" algorithm are used, except that a deterministic projection onto the hypersphere is performed for each iteration $\param_{k+1} = \Pi(\param_k - \gamma_k \widehat{\nabla \objfunc}(\param_k))$. For BinaryConnect, we use the method described in \cite{Courbariaux2015}. For Mirror Descent (MD), we use the code\footnote{https://github.com/kartikgupta-at-anu/md-bnn} from \cite{ajanthan2021mirror} and implement the version $\operatorname{tanh}(\cdot)$ (without annealing and with $\alpha=0.01$ and $\mu =100$ when training). ProxQuant was run with the parameters specified in \cite{bai2018proxquant}. Note ProxQuant does not initially quantize the fully-connected layer, and add full precision biases. For fair comparison we have tested
ProxQuant with all layers binarized. The \Algo\ method is described in \Cref{alg:detailed}. Multiple values for $\alpha$ in $[0.1, 5]$ are considered. The precision threshold $\epsilon$ is decreased from epoch to epoch: it is set to $1$ at the beginning and then exponentially annealed to $.88^t$ in the last 50 epochs, where $t$ is the epoch. After the last step, all weights are within an interval of length $\epsilon_{\operatorname{final}}=0.01$ of $\{-1, +1\}$.

For \Algo\, we apply the function $\operatorname{sign}(\cdot)$ to our NN before evaluating it on the test set. For a fair comparison, each method was randomly initialized and independently executed 5 times. An intensive learning rate search was also performed independently for each method.  The learning rate at epochs $[20, 40]$ is divided by $2$ for all methods.

Most neural networks use the inference accuracy of image classification as an evaluation metric. We first compared the training/testing accuracy with the CIFAR-10 dataset \citep{Krizhevsky2009}, which consists of 50000 training images and 10000 test images (in 10 classes). \Cref{fig:histograms_weights} illustrates the distribution of the weights of the first convolutional layer (the behavior is similar for other layers) at epochs 20, 39, 55, and 99. 
\begin{figure}[h!]
    \centering
    \includegraphics[scale=0.25]{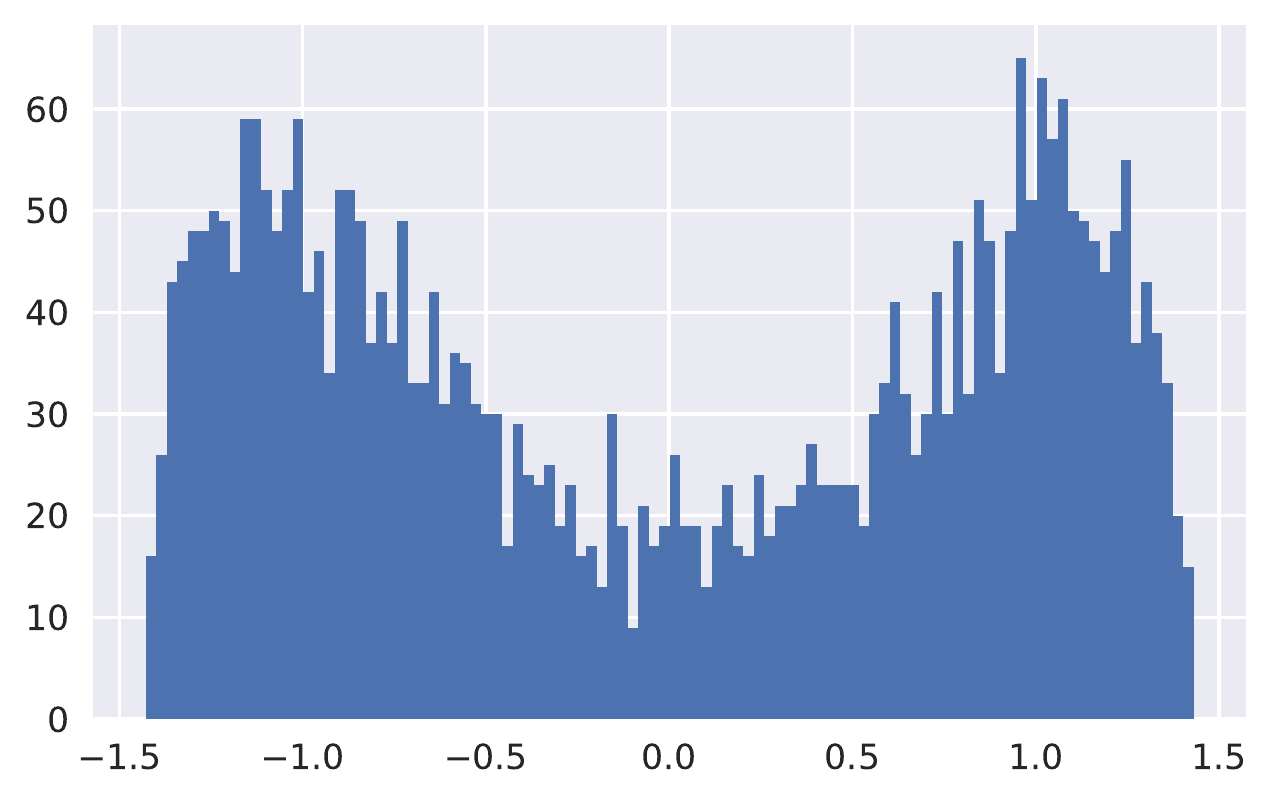}
    \includegraphics[scale=0.25]{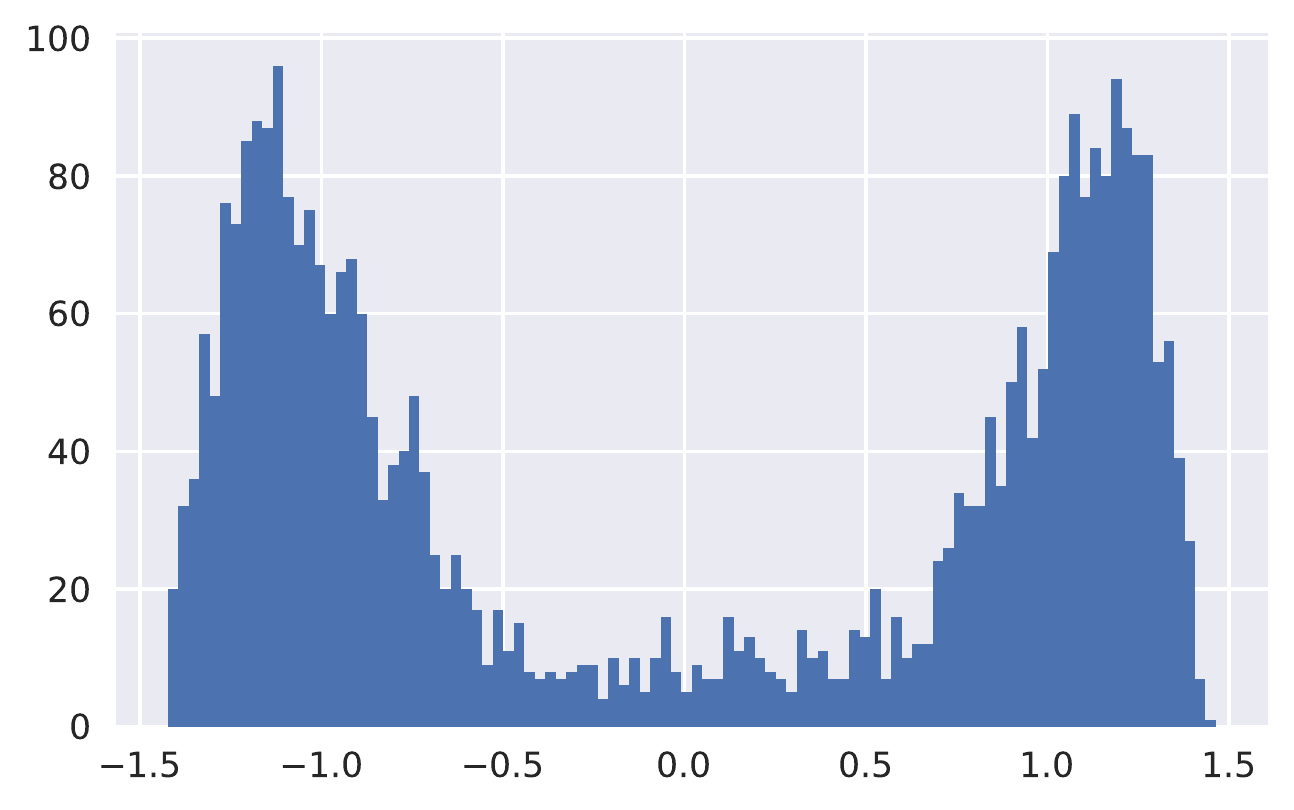}
    \includegraphics[scale=0.25]{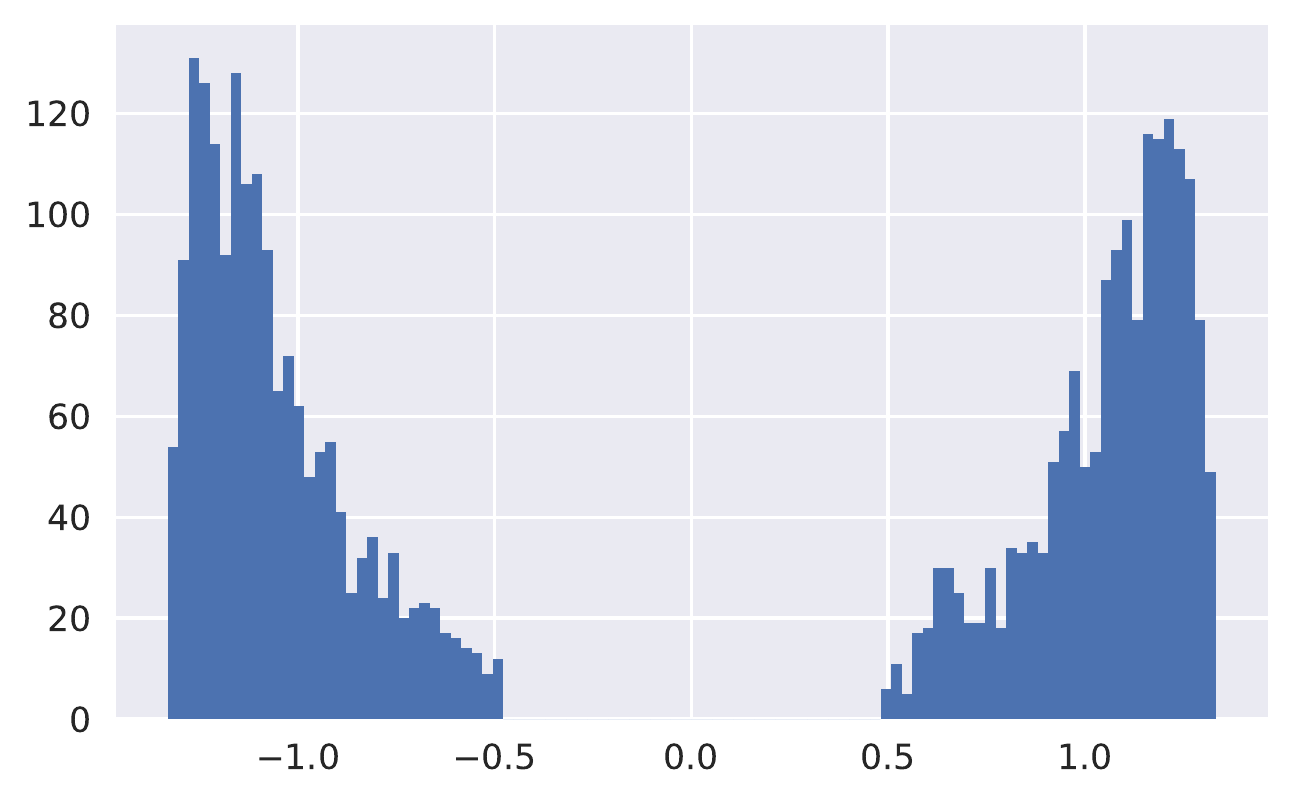}
    \includegraphics[scale=0.25]{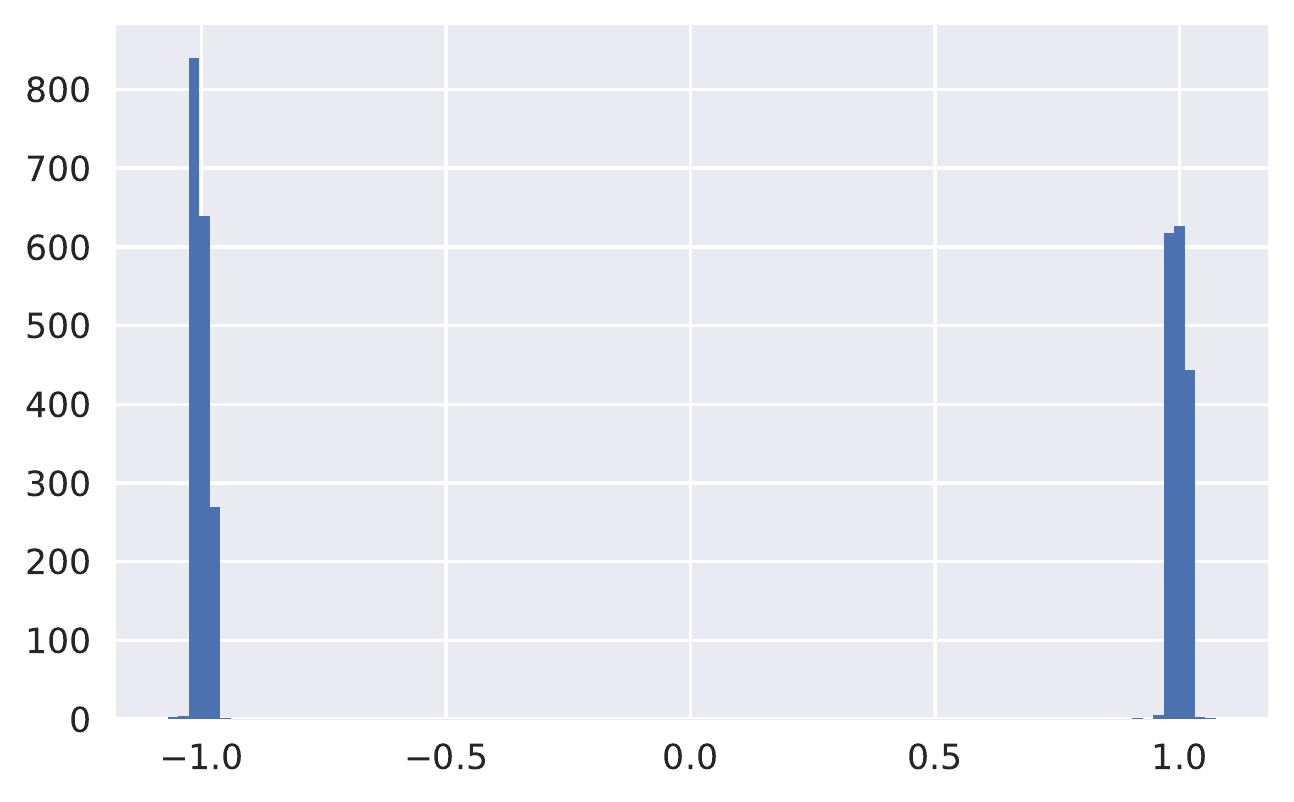}
    \caption{Histogram of weights during the training phase of our \Algo\ [W1/A32] on CIFAR-10.}
    \label{fig:histograms_weights}
\end{figure}
\begin{table}[h!]
  \caption{Test accuracy (average over 5 random experiments) for \Algo\ [W1/A32] at several epochs.}
  \label{tab:accuracy_steps}
  \centering
  \begin{tabular}{cccc}
    \toprule
    %Epochs & $\epsilon$ & CIFAR-10 (VGG-small) & TinyImageNet (ResNet-18) \\
    Epochs & $\epsilon$ & CIFAR-10 & TinyImageNet \\
    \midrule
    \cellcolor{green!60!black} 50 & 0.88 & 75.77 & 8.74 \\
    \cellcolor{green!80!black} 65 & 0.15 & 88.37 & 31.97 \\
    \cellcolor{green!80!} 90 & 0.006 & 88.84 & 46.96 \\
    \bottomrule
  \end{tabular}
\end{table}
\begin{figure}[h!]
    \centering
    \includegraphics[scale=0.28]{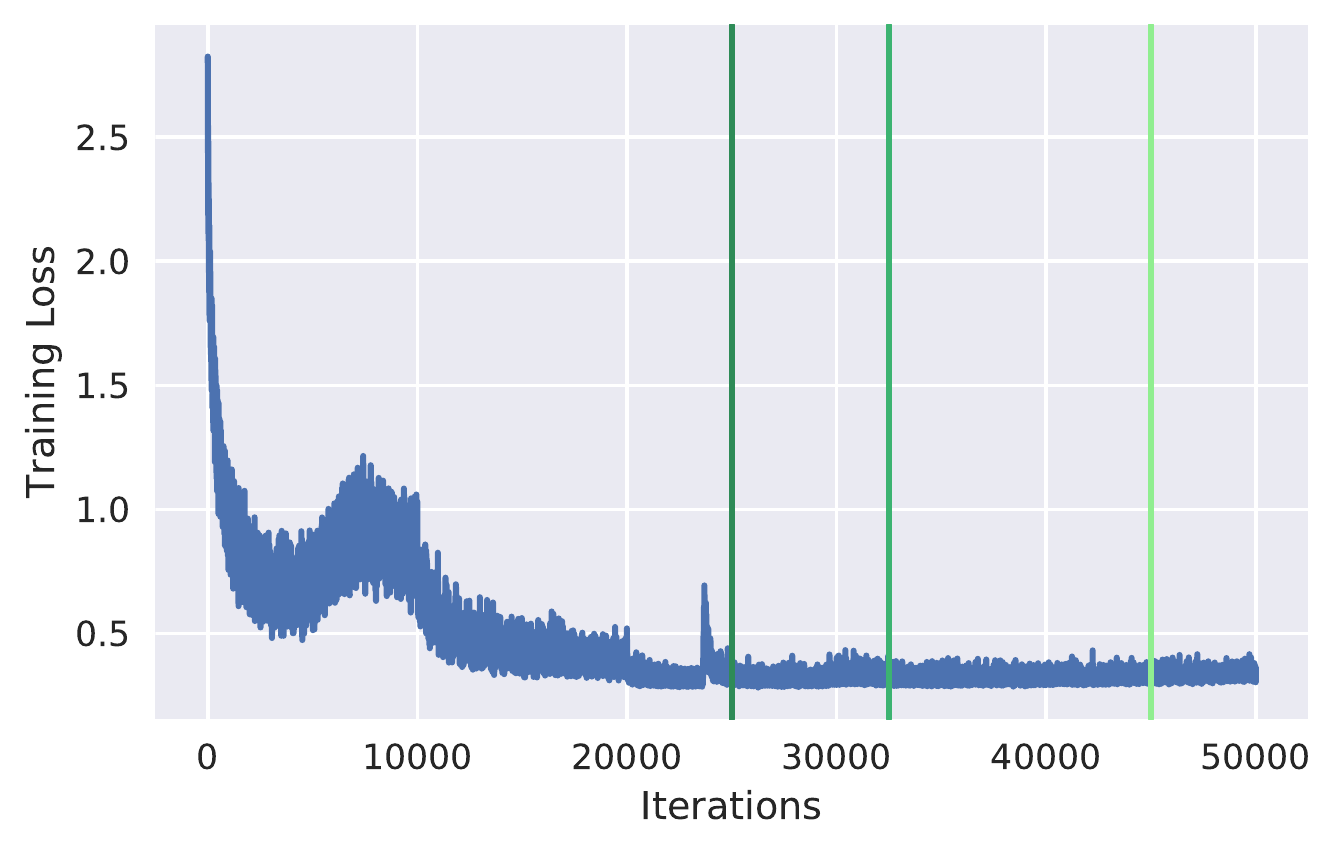}
    \includegraphics[scale=0.28]{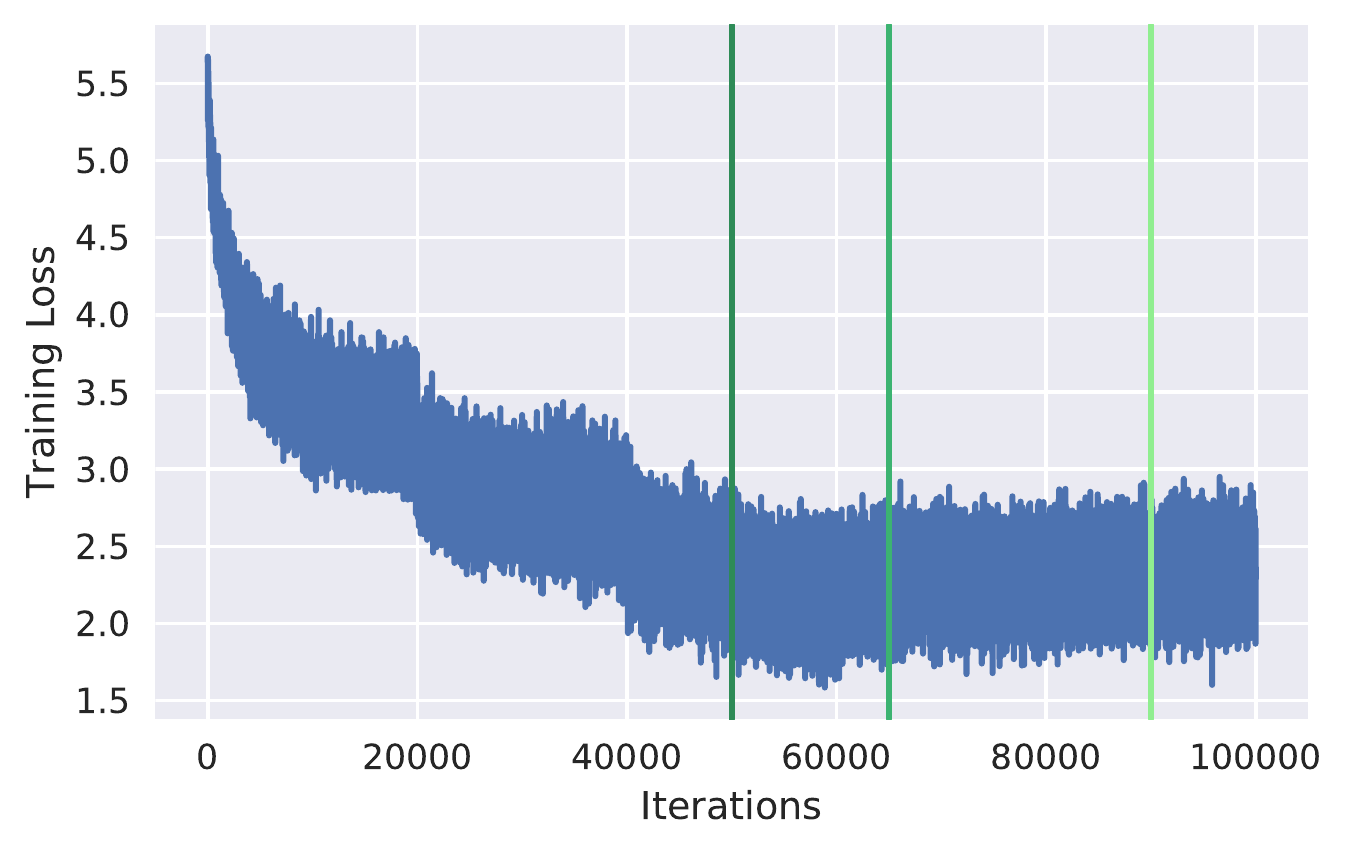}
    \caption{Training Loss of \Algo\ [W1/A32] on CIFAR-10 (left) and TinyImageNet (right). The x-axis represents the batch iterations and green vertical lines correspond to epochs $[50, 65, 90]$.}
    \label{fig:loss_cifar10}
\end{figure}
We have also tested \Algo\ [W1/A32] on the TinyImageNet dataset \citep{le2015tiny} with a ResNet-18. TinyImageNet has 200 classes and each class has 500 (RGB) training images, 50 validation images, and 50 test images. To train ResNet-18 we follow the common practices used for training NNs: we resize the input images to $64 \times 64$ and then randomly flip them horizontally during training. During testing we center-crop them to the corresponding sizes.
In \Cref{fig:loss_cifar10}, the loss increases slightly in the final steps as the constraints become more stringent. However, this increase in training loss remains moderate and the final performance in both the training set and the test set is the best among all methods. Some test accuracies are presented in \Cref{tab:accuracy_steps} at several epochs (identified with green lines in \Cref{fig:loss_cifar10}) with the corresponding precision $\epsilon$. The best test classification accuracies of the binary networks obtained with each method are listed in Table\ref{tab:accuracy_cifar10}. For reproducibility none of the concurrent results are reported from existing papers, but each approach has been independently rerun from the available codes. Compared to other binarization algorithms, our method consistently yields better or equivalent results, while narrowing the performance gap between binary networks and floating-point counterparts on multiple datasets to an acceptable level. The performance of the projected gradient method highlights the strength of our method: we do not simply project the iterates on the nearest constraint set, but progressively push the iterates towards a smoothed version of the constraints (see Section~\ref{sec:alg_dev}), which leads to better results.

\begin{table}[h!]
  \caption{Best Test accuracy (average and variance over 5 random experiments) after 100 training epochs.}
  \label{tab:accuracy_cifar10}
  \centering
  \begin{scriptsize}
  \begin{tabular}{ccc}
    \toprule
    %Method & CIFAR-10 (VGG-small) & TinyImageNet (ResNet-18) \\
    Method & CIFAR-10 & TinyImageNet \\
    %Method & CIFAR-10 (VGG-small) \\
    \midrule
    Full-precision [W32/A32] & 89.46 $\pm$ 0.07 & 56.46 $\pm$ 0.46 \\
    \midrule
    BinaryConnect [W1/A32]& 88.33 $\pm$ 0.29 & 42.35 $\pm$ 0.33 \\
    MD [W1/A32]& 88.13 $\pm$ 0.25 & 34.89 $\pm$ 0.36 \\
    ProxQuant [W1/A32]& 88.22 $\pm$ 0.28 & 48.79 $\pm$ 0.32 \\
    \midrule
    Projected gradient [W1/A32]& 71.34 $\pm$ 0.46 & 11.78 $\pm$ 0.67\\
    \Algo\ [W1/A32]& \textbf{88.98} $\pm$ 0.35 & \textbf{50.23} $\pm$ 0.37 \\
    \bottomrule
  \end{tabular}
  \end{scriptsize}
\end{table}
%\Algo\ outperforms or is on par with other BNN training methods on CIFAR-10 and TinyImageNet classification tasks.

\subsection{Low-bit quantization}
We consider now low-bit weight quantization and activation quantization.  To fully benefit from low precision arithmetic, one should also tackle the problem of gradient quantization \citep{chmiel2021logarithmic, sun2020ultra} and accumulation.  We keep the last fully connected layer in full-precision, following \cite{liu2020reactnet, chmiel2021logarithmic}.  We evaluate the performance of \Algo\ [W1/A32], \Algo\ [W2/A4], and \Algo\ [W4/A4] on  TinyImageNet and ImageNet \citep{ILSVRC15} datasets with a ResNet-18 network. For \Algo\, we project NN weights onto the set of quantization values before evaluating it on the test set. For ImageNet, we  keep the first convolution layer in full-precision. We use the same pre-processing (centering and data normalization) for all the methods: we resize the input images to $256 \times 256$ and then randomly crop them to $224 \times 224$ while centering them to the appropriate sizes during training. Standard multiclass cross entropy loss is used. All models are fine-tuned for $200$ epochs using the Adam \citep{kingma2014adam} optimizer with dynamics of $0.9$ and $0.999$ and a batch of size $512$. All methods are trained with an initial learning rate of $0.06$ for TinyImagenet and $0.1$ for ImageNet. The same hyperparameters are used as in the previous section for TinyImageNet. For ImageNet, the learning rate at epochs $[30, 60, 90]$ is divided by $10$ for all methods. 
%For \Algo\ [Wx/A4] we follow the guidelines from LUQ~\citep{chmiel2021logarithmic} to define the quantization atoms $\{c^i\}$: a scaling factor $s$ is computed layerwise and the weights are rescaled by $\frac{s}{2^{x-1}}$ and clipped in $[-2^{x-1}, 2^{x-1}]$.\louis{maintenant que c deja explique dans la section Algo, on peut le virer d'ici non ?} The quantization atoms are uniformly chosen among this interval. LUQ~\citep{chmiel2021logarithmic} is a method that enables 4-bit quantization of both the forward phase and the neural gradients.
We have run the code\footnote{https://openreview.net/forum?id=clwYez4n8e8} from LUQ and adapted it to TinyImageNet dataset. For a fair comparison we compute neural gradients in full precision. The results for the method Ultra-low \citep{sun2020ultra} are taken  from \cite{chmiel2021logarithmic}.

We decided not to include the regularisation-based binarization approach \citep{ding2019regularizing}, which addresses the activation binarization problem, in our benchmark. We have also not included in our benchmark  improvements of BC methods which have been proposed in \citep{zhou2016dorefa, liu2018bi, bethge2020meliusnet, Rastegari2016, martinez2019training}; these methods  are all based on the STE~\citep{Courbariaux2015} optimizer to update quantized weights. These methods have been shown to be outperformed by AdaBin \citep{tu2022adabin} and ReacNet \cite{liu2020reactnet}. The latter are currently SOTA methods for energy-friendly inference on the ImageNet dataset. Note that these binary approaches still have a gap in terms of full precision performance, which needs to be addressed by modifying the NN structure \citep{liu2020reactnet}. For ReacNet and AdaBin, we have reported the best results of \cite{tu2022adabin} for ResNet-18 on ImageNet.

\begin{table}[h!]
  \caption{Best Test accuracy (single run for ImageNet due to longer training time) after 200 training epochs. * indicates the results are directly reported from existing literature.}
  \label{tab:accuracy_tinyimg_imgnet}
  \centering
  \begin{tabular}{ccc}
    \toprule
    %Method & CIFAR-10 (VGG-small) & TinyImageNet (ResNet-18) \\
    Method & TinyImageNet & ImageNet\\
    \midrule
    Full-precision [W32/A32] & 56.46 $\pm$ 0.46 & 69.32\\
    \midrule
    ReacNet [W1/A1] (2 steps) & - & 65.5*\\
    AdaBin [W1/A1] (2 steps) & - & 66.4*\\
    \midrule
    Ultra-low [W4/A4] & - & 68.27*\\
    LUQ [W2/A4] & 54.14 $\pm$ 0.42 & - \\
    LUQ [W4/A4] & 55.69 $\pm$ 0.32 & 68.41 \\
    \Algo\ [W2/A4] & 53.54 $\pm$ 0.28 & 66.45\\
    \Algo\ [W4/A4] & \textbf{55.85} $\pm$ 0.30 & \textbf{68.51}\\
    \bottomrule
  \end{tabular}
\end{table}
\Algo\ performs better than or on par with state of the art QNN methods and offers a shorter gap to full precision performances compared with best BNNs.
\section{Conclusion}
In this paper, we present \Algo\ a novel framework for QNN training based on an annealed sequence of interval-constrained nonconvex optimization problems solved by an algorithm inspired by \cite{muehlebach2021constraints}. For each of these subproblems we give theoretical guarantees. \Algo\ outperforms or is on par with other QNN training methods on all considered tasks.

In the current context, we estimated the carbon footprint of our experiments to be about 180 kg CO2e (calculated using green-algorithms.org v2.1 \cite{lannelongue2021green}). This shed light on the crucial need to develop energy friendly NNs.

\subsubsection*{Acknowledgements}
The authors acknowledge support of the Lagrange Mathematics and Computing Research Center.

\clearpage
\pagebreak
\bibliographystyle{apalike}
\bibliography{biblio}

%%%%%%%%%%%%%%%%%%%%%%%%%%%%%%%%%%%%%%%%%%%%%%%%%%%%%%%%%%%%

\clearpage
\pagebreak
\appendix

\section{Proofs of \Cref{sec:alg_dev}}
\label{sec:app:proofs}
\subsection{Preliminaries}\label{sec:app_prel}
\textbf{Absolutely continuous curves.}
We say that a curve $\sy: \rset_{+} \rightarrow \rset^d$ is absolutely continuous (a.c.) if there is a curve $\sz: \rset_{+} \rightarrow \rset^d$, locally Lebesgue integrable, such that for every $t \geq 0$,
\begin{equation}
  \sy(t) - \sy(0) = \int_{0}^t \sz(u) \rmd u\, .
\end{equation}
In this case, it holds that for almost every $t \geq 0$, $\sy$ is differentiable and $\dot{\sy}(t) = \sz(t)$.

\textbf{Tangent and normal cones.} Let $C \subset \rset^d$ be a closed set. For $\param \in C$, the tangent cone of $C$ to $\param$, denoted by $T_{C}(\param)$, is the set of vectors $v \in \rset^d$ for which there exist $t_k \downarrow 0$ and $\param_k \rightarrow \param$, $\param_k \in C$, such that $(\param_k - \param)/t_k \rightarrow v$. The normal cone of $C$ at $\param$, denoted $N_{C}(\param)$, is the set of vectors $u \in \rset^d$ such that for any $v \in T_{C}(\param)$, $u^{\top}v \leq 0$. If $\param \notin C$, then by convention $T_{C}(\param), N_{C}(\param) = \emptyset$.

\textbf{The Mangasarian-Fromovitz constraint qualification (MFCQ) condition.}
Consider the case where $C = \{ \param \in \rset^d: \ineqfunc(\param) \geq0\}$, for a smooth function $\ineqfunc: \rset^d \rightarrow \rset^{n_g}$. Denote $\ActiveSet(\param) = \{ i \in \{1,\dots,n_g\}, g_i(\param) \leq 0\}$ as the set of active constraints. We say that the MFCQ condition holds at $\param \in \rset^d$ if there exists $v \in \rset^d$ such that $\nabla g_i(\param)^{\top}v \geq 0$ for all $i \in \ActiveSet(\param)$. If the MFCQ condition holds at $\param \in C$, then we can write down $T_C(\param)= \{ v \in \rset^d, \nabla g_i(\param)^\top v \geq 0, \text{for all} \, i \in \ActiveSet(\param) \}$
and
$N_C(\param) = \{ -\sum_{i=1}^{n_g} \lambda_i \nabla \ineqfunc_i(\param), \lambda_i \in \rset_+ \textrm{ and } \lambda_i=0 \textrm{ if } i \not\in \ActiveSet(\param) \}$ (see, e.g., \cite[Section 7.2]{borwein2006convex}). We might notice here, that in the context of Theorem~\ref{th:conv_alg} the MFCQ condition holds at every $\param \in C_{\epsilon}$.

\textbf{Differential inclusion.} Consider a closed set $C \subset \rset^d$ and $\objfunc: \rset^d \rightarrow \rset$ a smooth function. An essential ingredient of our proof will be the following differential inclusion (DI):
\begin{equation}\label{eq:di}
  \dot{\sy}(t) \in - \nabla \objfunc(\sy(t)) - N_{C}(\sy(t))\, .
\end{equation}
We say that an a.c. curve $\sy : \rset_{+} \rightarrow C$ is a solution to this DI if the inclusion holds for almost every $t \geq 0$. We say that $\objfunc$ is a Lyapunov function for the set $\cZ:= \{ \param \in \rset^d : 0 \in - \nabla \objfunc(\param) - N_C(\param)\}$ if for any such curve:
\begin{equation}
  \textrm{ for all }t>0\, , \quad \objfunc(\sy(t)) \leq \objfunc(\sy(0))\, ,
\end{equation}
with strict inequality as soon as $\sy(0) \notin \cZ$. We have the following lemma.
\begin{lemma}\label{lm:mfcq_lyap}
  Assume that MFCQ holds at every $\param \in C$. Then $\objfunc$ is a Lyapunov function for the DI~\eqref{eq:di} and the set $\cZ$.
\end{lemma}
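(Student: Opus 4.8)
The plan is to show that $t\mapsto\objfunc(\sy(t))$ is nonincreasing along every absolutely continuous solution $\sy:\rset_+\to C$ of~\eqref{eq:di}, and strictly decreasing unless $\sy(0)\in\cZ$ (here $C=\{\param:\ineqfunc(\param)\ge0\}$ as in the definition of MFCQ). First I would record the regularity facts: since $\objfunc$ is $C^1$ (hence locally Lipschitz) and $\sy$ is a.c., each of $t\mapsto\objfunc(\sy(t))$ and $t\mapsto\ineqfunc_i(\sy(t))$ is a.c.\ on compact intervals, so $\objfunc(\sy(t))-\objfunc(\sy(0))=\int_0^t\tfrac{\rmd}{\rmd s}\objfunc(\sy(s))\,\rmd s$ and it suffices to bound the a.e.\ derivative. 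For a.e.\ $t$ the derivative $\dot{\sy}(t)$ exists, the inclusion~\eqref{eq:di} holds, and the chain rule gives $\tfrac{\rmd}{\rmd t}\objfunc(\sy(t))=\nabla\objfunc(\sy(t))^\top\dot{\sy}(t)$ and $\tfrac{\rmd}{\rmd t}\ineqfunc_i(\sy(t))=\nabla\ineqfunc_i(\sy(t))^\top\dot{\sy}(t)$; call such $t$ ``good''.

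The heart of the argument is at a good $t$. Using~\eqref{eq:di}, write $\dot{\sy}(t)=-\nabla\objfunc(\sy(t))-n(t)$ with $n(t)\in N_C(\sy(t))$; since MFCQ holds everywhere on $C$, the normal cone admits the representation $n(t)=-\sum_{i\in\ActiveSet(\sy(t))}\lambda_i(t)\nabla\ineqfunc_i(\sy(t))$ with $\lambda_i(t)\ge0$, where $\ActiveSet(\sy(t))=\{i:\ineqfunc_i(\sy(t))=0\}$ because $\sy(t)\in C$. The key observation is that for each active $i$ the a.c.\ function $s\mapsto\ineqfunc_i(\sy(s))$ is nonnegative and equals $0$ at $s=t$, i.e.\ attains its minimum there; being differentiable at the good time $t$, its derivative vanishes, so $\nabla\ineqfunc_i(\sy(t))^\top\dot{\sy}(t)=0$. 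Consequently $n(t)^\top\dot{\sy}(t)=0$, and hence
\begin{equation*}
\tfrac{\rmd}{\rmd t}\objfunc(\sy(t))=\nabla\objfunc(\sy(t))^\top\dot{\sy}(t)=(-\dot{\sy}(t)-n(t))^\top\dot{\sy}(t)=-\norm{\dot{\sy}(t)}^2\le0 .
\end{equation*}
Integrating over $[0,t]$ gives $\objfunc(\sy(t))\le\objfunc(\sy(0))$ for all $t\ge0$.

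For the strict part, suppose $\sy(0)\notin\cZ$ but $\objfunc(\sy(t_0))=\objfunc(\sy(0))$ for some $t_0>0$. Then $t\mapsto\objfunc(\sy(t))$ is constant on $[0,t_0]$, so by the displayed identity $\dot{\sy}(t)=0$ for a.e.\ $t\in[0,t_0]$, whence $\sy\equiv\sy(0)$ on $[0,t_0]$. Choosing $t\in(0,t_0]$ at which both~\eqref{eq:di} holds and $\dot{\sy}(t)=0$ (a.e.\ $t$ works), one gets $0\in-\nabla\objfunc(\sy(0))-N_C(\sy(0))$, i.e.\ $\sy(0)\in\cZ$, a contradiction. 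Hence $\objfunc(\sy(t))<\objfunc(\sy(0))$ for all $t>0$, which is exactly what the definition of Lyapunov function for~\eqref{eq:di} and $\cZ$ requires.

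I expect the main obstacle to be the identity $\nabla\ineqfunc_i(\sy(t))^\top\dot{\sy}(t)=0$ for active $i$ at a.e.\ $t$: this is \emph{stronger} than the tangency inequality $\nabla\ineqfunc_i(\sy(t))^\top\dot{\sy}(t)\ge0$ that would follow merely from $\dot{\sy}(t)\in T_C(\sy(t))$, and it is precisely what collapses the a priori indeterminate expression $-\norm{\dot{\sy}(t)}^2-n(t)^\top\dot{\sy}(t)$ (where only $n(t)^\top\dot{\sy}(t)\le0$ is obvious) to exactly $-\norm{\dot{\sy}(t)}^2$. It rests on the elementary fact that an absolutely continuous real function has vanishing derivative a.e.\ on the set where it attains its minimum; the rest is measure-theoretic bookkeeping (all exceptional times form a Lebesgue-null set), and MFCQ enters only to license the finite-multiplier description of $N_C$ and to make the polar and KKT descriptions of the normal cone agree.
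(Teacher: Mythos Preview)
Your proof is correct and follows the same skeleton as the paper's: establish that $\dot{\sy}(t)$ is orthogonal to $N_{C}(\sy(t))$ for a.e.\ $t$, whence $\tfrac{\rmd}{\rmd t}\objfunc(\sy(t))=-\norm{\dot{\sy}(t)}^2$, then integrate. The difference lies in how the orthogonality is obtained. The paper observes that MFCQ implies Clarke regularity of $C$ and then invokes a general result from \cite{dav-dru-kak-lee-19} (their Sections~5--6) that yields $\dot{\sy}(t)^\top v=0$ for every $v\in N_C(\sy(t))$ a.e.\ along any absolutely continuous trajectory in $C$. You instead exploit the explicit multiplier form of $N_C$ furnished by MFCQ and the elementary fact that each $s\mapsto\ineqfunc_i(\sy(s))$, being nonnegative and vanishing at an interior time, must have zero derivative there whenever the derivative exists. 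Your route is entirely self-contained and avoids the external reference; the paper's route applies verbatim to any Clarke-regular $C$ without the specific inequality-constraint structure. Your handling of the strict inequality (constancy of $\sy$ on $[0,t_0]$, then re-evaluating the inclusion at a good time to force $\sy(0)\in\cZ$) is also more explicit than the paper's one-line appeal to ``closedness of~$\cZ$''.
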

\begin{proof}
   The assumption that MFCQ holds at every $\param \in C$ implies that $C$ is Clarke regular (i.e. if $(\param_k, u_k) \rightarrow (\param,u) \in C \times \rset^d$ with $(\param_k, u_k) \in C \times N_{C}(\param_k)$, then $u \in N_{C}(\param)$). As shown in (Sections 5 and 6 \cite{dav-dru-kak-lee-19}), this implies that for almost every $t \geq0$ and every $v \in N_{C}(\sy(t))$, $\dot{\sy}(t)^\top v = 0$. Therefore, for almost every $t \geq 0$,
   \begin{align}
     \frac{\rmd}{\rmd t}\objfunc(\sy(t)) &= \nabla \objfunc(\sy(t))^\top \dot{\sy}(t) \\
     &\in - \norm{\dot{\sy}(t)}^2 - \dot{\sy}(t)^\top \sv(t) = - \norm{\dot{\sy}(t)}^2 \, ,
   \end{align}
   where $\sv(t) = \nabla \objfunc(\sy(t)) - \dot{\sy}(t) \in -N_{C}(\sy(t))$.
   This shows that $\sy(t) -\sy(0)= -\int_{0}^t \norm{\dot{\sy}(u)}^2\rmd u $, which, by closedness of $\cZ$, implies our statement.
\end{proof}

In Section~\ref{sec:alg_dev} the set of interest will be $C_{\epsilon}$. It can be easily seen that under the assumptions of Theorem~\ref{th:conv_alg} the MFCQ condition is satisfied at every $\param \in C_{\epsilon}$. Thus, Lemma~\ref{lm:mfcq_lyap} implies that, in this context, $\objfunc$ is a Lyapunov function for the DI: $\dot{\sy}(t) \in -\objfunc(\sy(t)) - N_{C_{\epsilon}}(\sy(t))$.

%\remark{\cite{spingarn1979generic} showed that even if for any set
%$C = \{\param \in \rset^d : \ineqfunc(\param) \geq 0 \}$ the MFCQ condition does not hold, which for $\epsilon = (\epsilon_1, \dots, \epsilon_{n_g})$, the linearly perturbed set $C = \{\param \in \rset^d : \ineqfunc_i(\param) \geq \epsilon_i \}$, it holds that for almost any $\epsilon \in \mathrbb{R}^{n_g}$, the set $C_\epsilon$ satisfies the LICQ condition (i.e. for all $\param$ in $C_\epsilon$, the gradients of the active constraints are linearly independent). Since the LICQ condition implies the MFCQ condition, this shows that the MFCQ condition is generic (in the sense of \cite{spingarn1979generic}) and justifies why such an assumption is used in many papers adressing constrained optimization problems.
%The additional result obtained in \cite{bolte2018qualification} shows that if the function is semi-algebraic or definable (these cases include all the usual functions used in optimization), then MFCQ holds for all sets $C_\epsilon$ where $\epsilon$ is chosen small enough. In a sense, it would really be "bad luck" if we stumble upon a set for which the MFCQ condition (for $\param \in C$) is not satisfied. Note also that if the MFCQ condition is satisfied for all $\param \in C$, then it automatically holds for some neighborhood of $C$. Since we can control a lower bound on $\ineqfunc$, with some additional work we could have simplified this assumption by requiring that the MFCQ condition is satisfied only in one neighborhood of $C$.}

\textbf{Discrete approximations of differential inclusions.} The idea of our proof is to apply the results of \cite{dav-dru-kak-lee-19} on the stochastic approximation of differential inclusions to our setting. To this end, we consider an $\rset^d$-valued sequence $(y_k)$ constructed as follows:
\begin{equation}
  y_{k+1} = y_k - \gamma_k \nabla \objfunc(y_k) + \gamma_k \eta_{k+1} - \gamma_k u_k \, ,
\end{equation}
where $(\gamma_k)$ is a sequence of positive step-sizes and $(\eta_{k}), (u_k)$ are some $\rset^d$-valued sequences. Here, $u_k$ represent some approximation of an element of $N_{C}(y_k)$, and $\eta_{k+1}$ some (stochastic or deterministic) perturbation. Therefore, $(y_k)$ might be seen as an Euler-like discretization of the DI~\eqref{eq:di}.

The following proposition follows from a general result of \cite[Theorem 3.2]{dav-dru-kak-lee-19}. We state it, applied to our particular case.

\begin{theorem}
\label{theo:dav_drus}
Assume that:
\begin{enumerate}[nosep]
  \item The sequence $(\gamma_k)$ satisfies $\sum_{j=0}^{+\infty} \gamma_j = + \infty$ and $\sum_{j=0}^{+\infty} \gamma_j^2 < +\infty$.
  \item The sequence $\left(\sum\nolimits_{j=0}^{n} \gamma_j \eta_{j+1} \right)$ converges.
  \item The sequence $(y_k, u_k)$ is bounded.
  \item If $y_{k_j}$ is a subsequence such that $y_{k_j} \rightarrow y_{\infty}$, then $y_{\infty} \in C$ and the distance between $- N_C(y_{\infty}) - \nabla \objfunc(y_{\infty})$ and $- 1/n\sum_{j=1}^n \{ \nabla \objfunc(y_{k_j}) + u_{k_j} \}$ goes to zero.
  \item $\objfunc$ is a Lyapunov function for the DI~\eqref{eq:di}.
  \item The set $\objfunc(\cZ)$ is of empty interior.
\end{enumerate}
Then, $\objfunc(y_k)$ converges and $\limsup_{k \rightarrow + \infty} d(y_k, \cZ) = 0$.
\end{theorem}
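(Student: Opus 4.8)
The plan is to recognize the displayed recursion preceding the statement as a discrete stochastic approximation of the differential inclusion \eqref{eq:di} with set-valued right-hand side $H(\param) := -\nabla\objfunc(\param) - N_C(\param)$, and then to verify that hypotheses (1)--(6) are exactly a translation of the conditions under which \citet[Theorem~3.2]{dav-dru-kak-lee-19} guarantees convergence; the statement then follows by invoking that theorem. Concretely, first I would form the continuous-time interpolation $\bar{y}$ of $(y_k)$: set $\tau_0 = 0$, $\tau_{n+1} = \tau_n + \gamma_n$, and let $\bar{y}$ be the piecewise-affine curve with $\bar{y}(\tau_n) = y_n$. Rewriting the recursion as $y_{k+1} - y_k = \gamma_k(-\nabla\objfunc(y_k) - u_k) + \gamma_k\eta_{k+1}$ exhibits $-\nabla\objfunc(y_k) - u_k$ as the drift and $\eta_{k+1}$ as an additive perturbation.

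The core of the argument is to show that $\bar{y}$ is a perturbed solution (asymptotic pseudotrajectory) of \eqref{eq:di}. Hypothesis (1) fixes the time scale $\tau_n \to \infty$ and makes individual increments vanish; hypothesis (3) gives that $(y_k)$ and $(u_k)$ stay in a fixed compact set, so the family of time-shifts $\{\bar{y}(\cdot + \tau_n)\}$ is relatively compact and every accumulation curve is Lipschitz. Hypothesis (2), the convergence of $\bigl(\sum_{j=0}^n \gamma_j \eta_{j+1}\bigr)$, is precisely what forces the cumulative effect of the perturbation to be asymptotically negligible on every compact time interval, so the accumulation curves are genuine solutions of the unperturbed inclusion once the drift is known to be admissible. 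The admissibility of the drift is the content of hypothesis (4): along any convergent subsequence $y_{k_j} \to y_\infty$, the Cesàro-averaged drift $-\tfrac{1}{n}\sum_{j=1}^n\{\nabla\objfunc(y_{k_j}) + u_{k_j}\}$ approaches $H(y_\infty) = -\nabla\objfunc(y_\infty) - N_C(y_\infty)$, which is what identifies the limiting velocity field with the right-hand side of \eqref{eq:di} even though $N_C$ is only set-valued and discontinuous. Collecting these facts places us exactly in the framework of \citet[Theorem~3.2]{dav-dru-kak-lee-19}.

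Given the asymptotic-pseudotrajectory property, I would then appeal to the limit-set characterization used in \cite{dav-dru-kak-lee-19}: the limit set $L := \bigcap_{n} \overline{\{y_k : k \ge n\}}$ is nonempty, compact, connected, and internally chain transitive for the flow induced by \eqref{eq:di}. Hypothesis (5), that $\objfunc$ is a Lyapunov function for the DI and the set $\cZ$ (as supplied by \Cref{lm:mfcq_lyap} in our application), implies that $\objfunc$ is constant on any internally chain transitive set and that $L \subseteq \cZ$ whenever $\objfunc(L)$ reduces to a point. Hypothesis (6), that $\objfunc(\cZ)$ has empty interior, is the final ingredient: a connected internally chain transitive set $L$ would otherwise be allowed to carry a whole subinterval of Lyapunov values lying inside $\objfunc(\cZ)$, but empty interior rules this out and forces $\objfunc$ to be constant on $L$. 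Consequently $\objfunc(y_k)$ converges to that constant value and $\limsup_{k \to \infty}\dist(y_k, \cZ) = 0$.

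The delicate point --- and the reason the general theorem of \cite{dav-dru-kak-lee-19} is invoked rather than the classical single-valued ODE method --- is the matching of hypothesis (4) with the closed-graph/upper-semicontinuity requirement that the inclusion theory places on $H$. Because $N_C$ jumps as constraints activate and deactivate, the exact discrete drift $-\nabla\objfunc(y_k) - u_k$ need not lie in $H(y_k)$ at finite $k$; only the \emph{averaged}, asymptotic version does. Verifying that this inexact, Cesàro-averaged selection condition is still strong enough to certify the asymptotic-pseudotrajectory property for a set-valued right-hand side is the technical crux, and it is exactly the generality that \citet[Theorem~3.2]{dav-dru-kak-lee-19} provides; the remainder of the proof is bookkeeping to align our notation with theirs.
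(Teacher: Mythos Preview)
Your proposal is correct and takes essentially the same approach as the paper: the paper's proof is a one-line invocation of \citet[Theorem~3.2]{dav-dru-kak-lee-19} with $G = -\nabla\objfunc - N_C$ and Lyapunov function $\phi = \objfunc$, and your write-up simply unpacks why hypotheses (1)--(6) feed directly into that theorem. Your added detail about the interpolated curve, the role of the Ces\`aro-averaged drift condition (4), and the Lyapunov/empty-interior mechanism is all accurate and helpful, but not a different route.
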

\begin{proof}
Apply \cite[Theorem 3.2]{dav-dru-kak-lee-19}, with $G = -\nabla \objfunc - N_{C}$ and $\phi = \objfunc$.
\end{proof}

\begin{lemma}
  We can replace the 4-th assumption in \Cref{theo:dav_drus} by the following assumption: if $(y_{\infty},u_{\infty})$ is a cluster point of $(y_k, u_k)$, then $u_{\infty} \in  N_{C}(y_{\infty})$.
\end{lemma}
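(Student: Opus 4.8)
The plan is to show that the new hypothesis on cluster points of $(y_k,u_k)$, combined with assumption~3 (boundedness of $(y_k,u_k)$), already yields assumption~4 of \Cref{theo:dav_drus}. The crucial structural observation I would exploit is that the normal cone here is defined as the polar of the tangent cone, $N_C(\param)=T_C(\param)^{\circ}$, so $N_C(\param)$ is automatically a \emph{closed convex cone} whenever it is non-empty (and empty otherwise); this is precisely what makes Cesàro averaging compatible with membership in $N_C$. No separate continuity property of the set-valued map $\param \mapsto N_C(\param)$ will be assumed --- it is extracted locally from the cluster-point hypothesis.

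First I would dispatch the ``easy half'' of assumption~4: given a subsequence $y_{k_j}\to y_\infty$, boundedness of $(u_k)$ lets me pass to a further subsequence along which $u_{k_j}\to u_\infty$, so that $(y_\infty,u_\infty)$ is a cluster point of $(y_k,u_k)$; the new hypothesis then gives $u_\infty\in N_C(y_\infty)$, forcing $N_C(y_\infty)\neq\emptyset$, i.e.\ $y_\infty\in C$. Next I would establish the key intermediate claim that $d(u_{k_j},N_C(y_\infty))\to 0$ along the original subsequence. I would argue by contradiction: if some sub-subsequence $(u_{k_{j_\ell}})$ stays at distance $\ge\delta>0$ from $N_C(y_\infty)$, extract (again using boundedness) a convergent sub-subsequence with limit $u_\infty$; since $y_{k_{j_\ell}}\to y_\infty$ too, $(y_\infty,u_\infty)$ is a cluster point, hence $u_\infty\in N_C(y_\infty)$; but $z\mapsto d(z,N_C(y_\infty))$ is $1$-Lipschitz, so $d(u_\infty,N_C(y_\infty))\ge\delta>0$, contradicting closedness of $N_C(y_\infty)$.

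With this claim the remainder is routine. For each $j$ I take a nearest point $w_j\in N_C(y_\infty)$ to $u_{k_j}$, so $\norm{u_{k_j}-w_j}\to 0$; since $N_C(y_\infty)$ is a convex cone, $\tfrac1n\sum_{j\le n}w_j\in N_C(y_\infty)$, and the triangle inequality together with the vanishing of Cesàro means of a null sequence gives $d\bigl(\tfrac1n\sum_{j\le n}u_{k_j},\,N_C(y_\infty)\bigr)\to 0$. Continuity of $\nabla\objfunc$ gives $\tfrac1n\sum_{j\le n}\nabla\objfunc(y_{k_j})\to\nabla\objfunc(y_\infty)$. Choosing $c_n\in N_C(y_\infty)$ with $\norm{\tfrac1n\sum_{j\le n}u_{k_j}-c_n}\to0$, the vector $-c_n-\nabla\objfunc(y_\infty)$ lies in $-N_C(y_\infty)-\nabla\objfunc(y_\infty)$, and one more application of the triangle inequality shows that the distance between $-\tfrac1n\sum_{j\le n}\{\nabla\objfunc(y_{k_j})+u_{k_j}\}$ and $-N_C(y_\infty)-\nabla\objfunc(y_\infty)$ tends to zero --- which is exactly assumption~4.

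I expect the only genuine subtlety to be the contradiction argument in the second paragraph: it is what replaces an upper-semicontinuity statement for $N_C$ that one does not have for a general closed set, and it works only because $N_C(y_\infty)$ is closed and convex (being a polar cone), which is also what allows the conclusion to survive the passage to Cesàro averages. All the rest --- subsequence extraction, $1$-Lipschitzness of the distance function, Cesàro convergence, continuity of $\nabla\objfunc$ --- is elementary and I would not belabour it.
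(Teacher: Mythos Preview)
Your argument is correct. The contradiction step showing $d(u_{k_j},N_C(y_\infty))\to 0$, followed by projecting each $u_{k_j}$ onto $N_C(y_\infty)$ and averaging, delivers assumption~4 cleanly; the only cosmetic wrinkle is the phrase ``contradicting closedness of $N_C(y_\infty)$''---the contradiction is simply with $u_\infty\in N_C(y_\infty)$ (which forces $d(u_\infty,N_C(y_\infty))=0$), closedness being used only later to guarantee the existence of the nearest point $w_j$.

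The route is genuinely different from the paper's. The paper never proves the pointwise statement $d(u_{k_j},N_C(y_\infty))\to 0$; instead it attacks the Ces\`aro mean directly via Carath\'eodory's theorem, writing $\tfrac{1}{n-m}\sum_{j>m}u_{k_j}$ as a convex combination of at most $d+1$ of the $u_{k_j}$'s, extracting convergent subsequences of these $d+1$ pairs $(\lambda_{m,n,i},u_{m,n,i})$, and then letting $m\to\infty$ so that the surviving $u$'s are forced into the cluster set $\mathcal{C}\subset N_C(y_\infty)$. Both proofs ultimately lean on the same two structural facts---boundedness of $(u_k)$ and convexity of $N_C(y_\infty)$---but your projection argument is more elementary (no Carath\'eodory, no double-limit bookkeeping) and makes transparent exactly where the cluster-point hypothesis enters. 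The paper's approach, on the other hand, works entirely at the level of averages and would generalise more readily to situations where one only controls averages rather than individual terms.
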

\begin{proof}
 If $(y_{k_j})$ is a subsequence such that $y_{k_j} \rightarrow y_{\infty}$, then $1/n \sum_{j=1}^{n}- \nabla \objfunc(y_{k_j}) \rightarrow_{n \to \infty} - \nabla \objfunc(y_{\infty})$. Furthermore, for any $m \geq 0$, we can write:
 \begin{equation}
   \frac{1}{n} \sum_{j=1}^n u_{k_j} = \frac{1}{n}\sum_{j=1}^m u_{k_j} + \frac{n-m}{n}\left(\frac{1}{n-m}\sum_{j=m+1}^{n}u_{k_j} \right) \, .
 \end{equation}
 By the Caratheodory theorem, we can write $1/(n-m) \sum_{j=m}^n u_{k_j} = \sum_{i=1}^{d+1} \lambda_{m,n,i} u_{m, n,i}$, where $\lambda_{m,n,i} \geq 0$, $\sum_{i=1}^{d+1} \lambda_{m,n,i} = 1$ and $u_{m,n,i} \in \{ u_{k_{m+1}}, \dots, u_{k_n}\}$.
 Denote $\mathcal{C} \subset N_{C}(y_{\infty})$ the set of cluster points of the sequence $u_{k_j}$.
Since the sequence $(u_k)$ is bounded, for each $i \in \{1, \dots, d+1\}$, we can extract a convergent sequence from $( \lambda_{m,n,i},u_{m,n,i})$ that converges to $(\lambda_m(i),u_m(i))$, with $u_m(i) \in \mathcal{C}\cup \bigcup_{j=m+1}^{+\infty} \{u_{k_j} \}$. Thus, $1/n \sum_{j=1}^n u_{k_j} \rightarrow \sum_{i=1}^{d+1} \lambda_m(i)u_m(i)$. As a consequence, we can write:
\begin{equation}
  \lim_{n \rightarrow + \infty} 1/n \sum_{j=1}^{n} u_{k_j} = \lim_{m \rightarrow \infty} \sum_{i=1}^{d+1}\lambda_{m}(i) u_m(i) \, .
\end{equation}
For each $i \in \{1, \dots, d+1\}$, the sequences $(\lambda_m(i))_{m \geq 0}, (u_m(i))_{m \geq 0}$ are bounded. Therefore, up to an extraction of a subsequence, we can assume that they converge to some $\lambda(i), u(i)$. Notice that $u(i) \in \mathcal{C} \subset N_{C}(y_{\infty})$. Therefore,  $1/n \sum_{j=1}^{n}u_{k_j}$ converges to a convex combination of elements of $N_{C}(y_{\infty})$. By convexity of $N_{C}(y_{\infty})$ this implies that $1/n \sum_{j=1}^n u_{k_j}$ converges to an element of $N_{C}(y_{\infty})$.

\end{proof}

The following lemma provides a condition under which $\objfunc(\cZ)$ has an empty interior.
\begin{lemma}\label{lm:sard_part}
  Assume that $\objfunc :\rset^d \rightarrow \rset$ is $d$-times continuously differentiable and that $C = [a_1, b_1] \times \dots \times [a_d, b_d]$, where for $1\leq i \leq d$, $a_i, b_i$ are some real numbers. Consider $\cZ = \{ y \in \rset^d: 0 \in - \nabla \objfunc(y) - N_{C}(y)\}$. It holds that $\objfunc(\cZ)$ is of empty interior.
\end{lemma}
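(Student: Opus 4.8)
The plan is to cover $\cZ$ by finitely many ``faces,'' on each of which $\objfunc$ restricts to a $C^d$ function of at most $d$ variables whose gradient vanishes, and then to apply Sard's theorem face by face. Since $C$ is a box, its normal cone factorizes, $N_C(y) = \prod_{i=1}^d N_{[a_i,b_i]}(y^i)$, and the $i$-th factor equals $\{0\}$ as soon as $y^i \in (a_i,b_i)$. So the first step is to record that for $y \in \cZ$ (equivalently $-\nabla\objfunc(y) \in N_C(y)$), every coordinate satisfies $\nabla_i\objfunc(y)=0$ or $y^i \in \{a_i,b_i\}$. Partitioning by which alternative holds, I would set, for each $T \subseteq \{1,\dots,d\}$ and each assignment $\sigma = (\sigma_i)_{i\notin T}$ of endpoints $\sigma_i \in \{a_i,b_i\}$,
\[
   E_{T,\sigma} = \bigl\{ y \in C : \nabla_i\objfunc(y) = 0\ (i\in T),\ y^i = \sigma_i\ (i\notin T)\bigr\},
\]
and observe, taking $T = \{i : \nabla_i\objfunc(y)=0\}$, that $\cZ \subseteq \bigcup_{T,\sigma} E_{T,\sigma}$, a finite union.

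Next, I would fix $(T,\sigma)$, let $m$ be the number of elements of $T$, relabel coordinates so that $T = \{1,\dots,m\}$, and define $g_{T,\sigma}\colon \rset^m \to \rset$ by freezing the remaining coordinates at $\sigma$, i.e. $g_{T,\sigma}(z_1,\dots,z_m) = \objfunc(z_1,\dots,z_m,\sigma_{m+1},\dots,\sigma_d)$. As the restriction of the $C^d$ function $\objfunc$ to an affine subspace, $g_{T,\sigma}$ is $C^d$ and hence $C^m$, since $m \le d$. For $y \in E_{T,\sigma}$, the point $z = (y^1,\dots,y^m)$ has $\nabla g_{T,\sigma}(z) = (\nabla_1\objfunc(y),\dots,\nabla_m\objfunc(y)) = 0$, so $\objfunc(y) = g_{T,\sigma}(z)$ is a critical value of $g_{T,\sigma}$. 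Sard's theorem for a $C^m$ map $\rset^m \to \rset$ (the sharp regularity requirement being $C^{\max(1,m)}$) then gives that $\objfunc(E_{T,\sigma})$ is Lebesgue-null in $\rset$; the case $m=0$ is trivial, as then $E_{T,\sigma}$ is a single point. Summing over the finitely many pairs $(T,\sigma)$, the set $\objfunc(\cZ)$ is Lebesgue-null, hence has empty interior, which is the claim.

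I expect the only real obstacle to be the bookkeeping that makes the Sard step legitimate: the decomposition must be arranged so that on each piece the \emph{full} gradient of $\objfunc$ vanishes in the free directions (not merely the gradient tangential to some lower-dimensional face), and the dimension of the relevant coordinate subspace, which can be as large as $d$, must be matched to the smoothness hypothesis through the Morse--Sard requirement $C^{\max(1,m)}$. This last point is precisely why the statement asks for $\objfunc$ to be $d$-times continuously differentiable.
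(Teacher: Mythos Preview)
Your proof is correct and follows essentially the same approach as the paper's: decompose into finitely many faces of the box where certain coordinates are pinned at endpoints, use the KKT condition to see that the gradient of $\objfunc$ vanishes in the remaining free directions, and apply Sard's theorem to the restriction of $\objfunc$ on each face. The only difference is cosmetic bookkeeping---the paper indexes faces by which coordinates lie in the open interval, whereas you index by which partial derivatives vanish---but the Sard step and the use of the $C^d$ hypothesis are identical.
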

\begin{proof}
Denote $\mathring{C}$ as the interior of $C$. The fact that $\objfunc(\cZ \cap \mathring{C})$ has an empty interior is a consequence of Sard's theorem and the fact that $\objfunc$ is $d$-times differentiable (see \cite{sard42}). We now show that the image of $\objfunc$ of any $m$-dimensional boundary of $C$ intersected by $\cZ$ also has an empty interior. Consider $m > 0$, and fix $m-d$ coordinates of $C$ as $c_{m+1}, \dots, c_{d}$, where $c_i$ is equal to $a_i$ or $b_i$, and denote $C_m = (a_1, b_1) \times (a_2, b_2) \dots \times (a_m, b_m) \times \{ c_{m+1}\} \times \dots \times \{c_{d}\}$. Note that if $y \in \cZ \cap C_m$, then the $m$ first coordinates of $\nabla \objfunc(y)$ are zero. Thus, if we call $\objfunc_m$ the restriction of $\objfunc$ to $C_m$, then $\objfunc_m : C_m \rightarrow \rset$ is $d$ times differentiable and $\cZ \cap C_m$ is included in its set of critical points. Applying Sard's theorem to $\objfunc_m$, we obtain that $\objfunc(\cZ \cap C_m)$ has an empty interior. Since $C$ can be written as a union of these $C_m$, this completes the proof.
\end{proof}

\subsection{A proof of \Cref{th:conv_alg}}
First we need to prove that the cluster point of the iterates $\param_\infty$ belongs to the constraints set $C_\epsilon$.
\begin{lemma}\label{lm:ineq_bound}
Under assumptions of Theorem~\ref{th:conv_alg} it holds that $\limsup_{k \to \infty} \dist(\param_k,C_{\epsilon})=0$ almost surely.
\end{lemma}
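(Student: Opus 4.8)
The plan is to show that the constraint violation $\max_i \max(0, -\psi^i_\epsilon(\param_k^i))$ vanishes along the iterates. First I would examine a single coordinate $i$ and track the quantity $\psi^i_\epsilon(\param_k^i) = \epsilon - \phi^i(\param_k^i)$, distinguishing whether $i$ is active or not at step $k$. When $i \notin \ActiveSet_\epsilon(\param_k)$, i.e. $\psi^i_\epsilon(\param_k^i) > 0$, the update along coordinate $i$ is $v_k^i = -\widehat{\nabla}_i\objfunc(\param_k)$ (possibly zero), which is bounded under \Cref{assum:sum-objective} (the iterates are confined because the constraints $C_\epsilon$ are bounded coordinate-wise and the skew direction cannot push indefinitely outward — I would first argue boundedness a.s., since otherwise nothing is well-defined). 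When $i \in \ActiveSet_\epsilon(\param_k)$, the defining inequality of $V_{\epsilon,\alpha}$ gives $v_k^i \psi^{i\prime}_\epsilon(\param_k^i) \geq -\alpha \psi^i_\epsilon(\param_k^i) \geq 0$, which is exactly the statement that the first-order change of $\psi^i_\epsilon$ along $v_k^i$ is at least $-\alpha\psi^i_\epsilon(\param_k^i)$ — a discrete analogue of $\frac{d}{dt}\psi \geq -\alpha\psi$, which forces $\psi$ to increase whenever it is negative.

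The key step is then a one-step recursion for the negative part. Writing $h_k^i := \max(0, -\psi^i_\epsilon(\param_k^i))$, a Taylor expansion of $\psi^i_\epsilon$ (which is $C^\infty$ away from the midpoints, and whose second derivative is locally bounded on the compact region where the iterates live) around $\param_k^i$ with the increment $\gamma_k v_k^i$ gives
\begin{equation}
\psi^i_\epsilon(\param_{k+1}^i) = \psi^i_\epsilon(\param_k^i) + \gamma_k v_k^i \psi^{i\prime}_\epsilon(\param_k^i) + O(\gamma_k^2 \norm{v_k^i}^2)\, .
\end{equation}
On the active set this yields $\psi^i_\epsilon(\param_{k+1}^i) \geq (1-\alpha\gamma_k)\psi^i_\epsilon(\param_k^i) - C\gamma_k^2$ for some constant $C$ (absorbing the clipping bound $M_{\epsilon,c}$ and the stochastic-gradient bound), hence $h_{k+1}^i \leq (1-\alpha\gamma_k)_+ h_k^i + C\gamma_k^2$; off the active set, $h_k^i = 0$ and the increment $\gamma_k v_k^i$ plus the $O(\gamma_k^2)$ term can only create a violation of size $O(\gamma_k^2)$. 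Summing $h_k := \sum_i h_k^i$, I get a recursion of the form $h_{k+1} \leq (1 - \alpha\gamma_k) h_k + C'\gamma_k^2$ valid for $k$ large (once $\gamma_k < 1/\alpha$). Standard ODE-method / Robbins–Siegmund-type arguments, using $\sum\gamma_k = \infty$ and $\sum\gamma_k^2 < \infty$ from \Cref{assum:stepsize}, then force $h_k \to 0$, i.e. $\dist(\param_k, C_\epsilon) \to 0$ almost surely (note $\dist(\param, C_\epsilon)$ is controlled by $\max_i h_k^i$ up to constants depending on $\epsilon$ and the quantization gaps, using the condition on $\epsilon$ that makes each coordinate's feasible set a union of well-separated intervals).

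The main obstacle I anticipate is the interplay between the clipping/midpoint pathology and the Taylor estimate: near a midpoint $(c^i_{Q}+c^i_{Q+1})/2$, $\psi^{i\prime}_\epsilon(\param^i)$ is small, so the unclipped direction $-\alpha\psi^i_\epsilon/\psi^{i\prime}_\epsilon$ blows up, and I must check that the clip to $M_{\epsilon,c}$ still guarantees a net increase of $\psi^i_\epsilon$ rather than overshooting past the midpoint into the wrong well — here the condition on $\epsilon$ (separation of the intervals by more than the reach of a single clipped step for $k$ large, since $\gamma_k \to 0$) is what saves the argument, together with the fact that once $\psi^i_\epsilon(\param_k^i) \geq 0$ the coordinate is inactive and its step is just the bounded gradient. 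A secondary technical point is handling the stochastic noise $\eta_{k+1}$: since the bound I need on $\norm{v_k}$ is almost sure (the clip makes $v_k$ deterministically bounded by $\max(M_{\epsilon,c}, \norm{\widehat\nabla\objfunc(\param_k)})$, and $\widehat\nabla\objfunc(\param_k)$ is bounded on the a.s.-bounded trajectory), the recursion above is a pathwise inequality and no martingale argument is needed at this stage — that is deferred to \Cref{lm:mart_conv}.
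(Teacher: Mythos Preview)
Your Lyapunov-recursion idea is natural, but two steps do not go through as stated. First, the claim that leaving the feasible set from an inactive coordinate creates only an $O(\gamma_k^2)$ violation is wrong: if $\psi^i_\epsilon(\param_k^i)>0$ but is close to zero, the first-order term $\gamma_k v_k^i\,(\psi^i_\epsilon)'(\param_k^i)$ in your Taylor expansion is $O(\gamma_k)$, not $O(\gamma_k^2)$, so $h_{k+1}^i$ can be of order $\gamma_k$. This alone is repairable (an $O(\gamma_k)$ error still vanishes), but it already invalidates the clean Robbins--Siegmund recursion you wrote. Second, and more seriously, the inequality $v_k^i\,(\psi^i_\epsilon)'(\param_k^i)\ge -\alpha\psi^i_\epsilon(\param_k^i)$ on which your contraction rests is precisely what clipping destroys: when the clip is active one has $v_k^i\,(\psi^i_\epsilon)'=M_{\epsilon,c}\,|(\psi^i_\epsilon)'|$, which near a midpoint is arbitrarily small while $-\alpha\psi^i_\epsilon$ stays bounded away from zero. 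So on the clipped regime you only get $h_{k+1}^i\le h_k^i+O(\gamma_k^2)$, with no contraction, and your sketch (``the separation condition saves the argument'') does not explain how the iterate ever leaves that regime. What is actually needed is a separate argument that the coordinate drifts \emph{away} from the midpoint at speed $\gamma_k M_{\epsilon,c}$ until clipping is no longer active --- and once you supply that, you have essentially reproved the paper's ``capture'' step.

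The paper avoids the Taylor/Lyapunov machinery entirely and argues by sign and monotonicity through a sequence of elementary claims: first, once $\gamma_k$ is small, each coordinate is trapped between two consecutive midpoints (so clipping eventually ceases); then, outside the feasible sub-interval $[c_-,c_+]$ of that region, the update has a definite sign pushing toward $[c_-,c_+]$, so $\param_k^i$ either enters $[c_-,c_+]$ infinitely often (and is then confined to an $O(\gamma_k)$-neighborhood of it), or stays forever on one side and converges monotonically to the boundary because $\sum\gamma_k=\infty$. This approach is longer to write but needs no second-order control of $\psi^i_\epsilon$ and handles the midpoint pathology directly rather than as an afterthought. Your recursion would become valid only after the paper's trapping argument is in place, at which point it offers no real shortcut.
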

\begin{proof}[Sketch of proof]
The detailed proof is given in the following section \Cref{prf:lm_ineq_bound}. The main idea is that for any $i\in \{1, \dots, d\}$, if $\psi_{\epsilon}^i(\param_k^i) < 0$ (i.e. $\param_k^i$ is outside of the constraints), then $\param_k^i$ is constantly pushed to the closest interval. Thus, the non-convergence might happen if and only if $\param_k^i$ visits one of the interval infinitely often. However, due to the fact, that $\gamma_k$ decreases to zero and that $\nabla \objfunc_j$ is bounded, this implies, for $k$ large enough, that $\param_k^i$ will never leave the ``region of attraction" of this interval (it will be kept at a distance of order $\gamma_k$ to this interval) and thus converge to it.
\end{proof}
\begin{proof}[Proof of \Cref{th:conv_alg}]
  Our goal is to apply Theorem~\ref{theo:dav_drus} and, hence, verify its assumptions. By a standard Martingale argument it holds that the sequence $\sum_{j=0}^{k} \gamma_j \eta_{j+1}$, almost surely, converges to a finite random variable (a short proof of this result is given in \Cref{prf:lm_ineq_bound}).
Consider a realization for which $\sum_{j=0}^{\infty} \gamma_j \eta_{j+1} < \infty$. Let $(\param_{\infty}, u_{\infty})$ be a cluster point of the sequence $(\param_k, u_k)$ and let $(k_j)_{j \geq 0}$ be a subsequence such that $\lim_{j \rightarrow + \infty}(\param_{k_j}, u_{k_j}) =(\param_{\infty}, u_{\infty})$. \Cref{lm:ineq_bound} shows that $\param_{\infty} \in C_{\epsilon}$. Since $\sup_{k \geq k_{0, \epsilon}}|\lambda_k^i| < + \infty$, we can extract a subsequence from $k_j$, and assume that $\lambda_{k_j} \rightarrow \lambda$. Thus, $u_{\infty}^i = - \lambda^i \psi'_{\epsilon}(\param^i_{\infty})$. Since all of the $\lambda_{k_j}^i$ are positive, it holds that $\lambda^i \geq 0$. Moreover, notice that if  $\psi_{\epsilon}(\param_{\infty}^i) >0$, then, for $j$ large enough, $\psi_{\epsilon}(\param_{k_j}^i) >0$ and, therefore, $\lambda_{k_j}^i = 0$. Hence, for $i \notin \ActiveSet(\param_{\infty})$, $\lambda^i = 0$.
This shows $u_{\infty} \in N_{C_{\epsilon}}(\param_{\infty})$. As shown in Lemma~\ref{lm:mfcq_lyap}, $\objfunc$ is a Lyapunov function for the DI: $\dot{\sy}(t) \in - \nabla \objfunc(\sy(t)) - N_{C_{\epsilon}}(\sy(t))$. In \Cref{lm:sard_part} we show that $\objfunc(\cZ)$ is of empty interior. Thus, with the help of \Cref{lm:ineq_bound,lm:mart_conv}, the assumptions of \Cref{theo:dav_drus} are satisfied, which concludes the proof.
\end{proof}

\subsection{A martingale result and proof of Lemma~\ref{lm:ineq_bound}}\label{prf:lm_ineq_bound}

We first establish a result on the convergence of the weighted sequence of perturbations.
\begin{lemma}\label{lm:mart_conv}
  Assume \Cref{assum:sum-objective}-\Cref{assum:stepsize}.  Then, almost surely, $\sum_{j=0}^{k} \gamma_j \eta_{j+1}$ converges.
\end{lemma}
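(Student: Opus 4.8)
The plan is to recognise $M_n:=\sum_{j=0}^{n}\gamma_j\eta_{j+1}$ as a square-integrable martingale (after a localisation) and to invoke the martingale convergence theorem under a summable conditional variance. Fix the filtration $\mathcal{F}_k:=\sigma\big(\param_0,(\text{minibatch indices drawn before step }k)\big)$, so that $\param_k$ is $\mathcal{F}_k$-measurable while the minibatch $\{j_1,\dots,j_{N_b}\}$ used at step $k$ is independent of $\mathcal{F}_k$ and uniform on $\{1,\dots,N\}$. Because every index $\ell$ is equally likely to appear in the minibatch, $\PE[\widehat{\nabla}\objfunc(\param_k)\mid\mathcal{F}_k]=\tfrac1N\sum_{\ell=1}^{N}\nabla\objfunc_\ell(\param_k)=\nabla\objfunc(\param_k)$, hence $\PE[\eta_{k+1}\mid\mathcal{F}_k]=0$ and $(M_n)$ is an $(\mathcal{F}_n)$-martingale with increments $\gamma_n\eta_{n+1}$.

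Next I would bound the conditional second moments. Sampling without replacement, a standard variance computation gives $\PE\big[\normin{\eta_{k+1}}^2\mid\mathcal{F}_k\big]\le\sigma^2(\param_k)$, where $\sigma^2(\param):=\tfrac1N\sum_{\ell=1}^N\normin{\nabla\objfunc_\ell(\param)-\nabla\objfunc(\param)}^2$ is a finite sum of continuous functions; by \Cref{assum:sum-objective} (Lipschitz gradients) each $\nabla\objfunc_\ell$ has at most linear growth, so $\sigma^2(\param)\le C_1+C_2\normin{\param}^2$ for suitable constants. Therefore, on the almost sure event that the iterates stay bounded, $\sup_k\sigma^2(\param_k)<\infty$ and \Cref{assum:stepsize} yields
\[
\sum_{j=0}^{\infty}\gamma_j^2\,\PE\big[\normin{\eta_{j+1}}^2\mid\mathcal{F}_j\big]\ \le\ \Big(\sup_{k}\sigma^2(\param_k)\Big)\sum_{j=0}^{\infty}\gamma_j^2\ <\ \infty\qquad\text{a.s.}
\]
Since $\sum_j\PE\big[\normin{M_{j+1}-M_j}^2\mid\mathcal{F}_j\big]<\infty$ almost surely, the convergence theorem for square-integrable martingales (applied, as usual, to the martingale stopped the first time its predictable quadratic variation exceeds a level $R$, then letting $R\to\infty$) shows that $(M_n)=\big(\sum_{j=0}^{n}\gamma_j\eta_{j+1}\big)$ converges almost surely to a finite limit, which is the claim.

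The only genuinely delicate ingredient is the almost sure boundedness of $(\param_k)$ used in the variance bound: unlike the classical SGD analysis, the stochastic gradient is not assumed globally bounded here. This boundedness is precisely the stability property that \Cref{assum:sum-objective} is meant to provide, and it can be read off the structure of $v_k=s^c_{\epsilon,\alpha}(\widehat{\nabla}\objfunc(\param_k),\param_k)$: whenever $\psi_\epsilon(\param^i_k)>0$ the coordinate $\param^i_k$ is confined to the bounded set $\{\phi^i<\epsilon\}$, and whenever $\psi_\epsilon(\param^i_k)\le0$ the $i$-th increment points strictly back towards the feasible intervals, with magnitude at most $M_{\epsilon,c}$ in the clipped branch and at least $\alpha|\psi_\epsilon(\param^i_k)|/|\psi_\epsilon'(\param^i_k)|$ in the unclipped branch, which forces a contraction of $|\param^i_k|$ once $\gamma_k$ is small enough; combined with $\gamma_k\to0$, no coordinate can escape to infinity. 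If one wishes to make the argument self-contained, one may instead run the martingale step above on the stopped process $M_{n\wedge\tau_R}$ with $\tau_R:=\inf\{k:\normin{\param_k}>R\}$ and let $R\uparrow\infty$, using that $\{\tau_R=\infty\}$ has probability one for $R$ large. I expect this boundedness/stability step, rather than the martingale machinery, to be where the real work lies.
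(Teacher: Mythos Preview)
Your argument is essentially the paper's: set up the natural filtration, verify $\PE[\eta_{k+1}\mid\mathcal{F}_k]=0$, bound the conditional second moment, and apply a square-integrable martingale convergence theorem. The only difference is in how the second moment is controlled: the paper writes $\PE[\normin{\eta_{k+1}}^2\mid\mathcal{F}_k]\le 4M_\objfunc$ directly, in effect reading $M_{\objfunc_j}$ as a uniform bound on $\normin{\nabla\objfunc_j}$ rather than merely a Lipschitz constant (the same reading appears in the proof of \Cref{lm:ineq_bound}, where $\normin{\widehat{\nabla}\objfunc(\param_k)}\le M$ is asserted outright), so it never needs boundedness of the iterates. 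Your more cautious route via linear growth and the stopping time $\tau_R$ is correct, and the a.s.\ boundedness you rely on is exactly the content of \Cref{lm:ineq_bound}, which the paper proves independently of the present lemma, so no circularity arises.
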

\begin{proof}
  Denote by $\mathcal{F}_k$ the filtration generated by $\{ \param_1, \dots, \param_k\}$. It holds that $\PE[\widehat{\nabla} \objfunc(\param_{k})|\mathcal{F}_k] = \nabla \objfunc(\param_k)$. Furthermore, almost surely, $\PE[\|\eta_{k+1}\|^2|\mathcal{F}_k] \leq 2 \PE[\|\widehat{\nabla} \objfunc(\param_{k})\|^2| \mathcal{F}_k] +2 \|\nabla \objfunc(\param_k)\|^2 < 4 M_{\objfunc} $, where $M_{\objfunc} = \sup_{ 1\leq j \leq N}M_{\objfunc_j}$.
  Thus, for $i \in \{1, \dots, d\}$, $\sum_{j=0}^k \gamma_j \eta^i_{j+1}$ is a martingale with an almost surely bounded square variation (since $\sum_{j=0}^{+\infty} \gamma_j^2 < + \infty$). The proof is concluded by applying \cite[Theorem 11.14]{klenke2013probability}.
\end{proof}
In all the sequel, it is implicitly assumed that $\epsilon$ was chosen small enough to satisfy the assumption of Theorem~\ref{th:conv_alg}.
Denote by $k_{0, \epsilon}$ the smallest integer after which we do not perform the clipping step in Algorithm~\ref{alg:detailed}.
\begin{align}
  k_{0, \epsilon}:= \inf\{ & k \geq 0: \textrm{ for } m \geq k\, , s^c_{\epsilon, \alpha}(\widehat{\nabla} \objfunc(\param_m), \param_m) =\\
  & s_{\epsilon, \alpha}(\widehat{\nabla} \objfunc(\param_m), \param_m)\} \, .
\end{align}
Since $\limsup \dist(\param_k, C_{\epsilon}) = 0$, it holds that $\liminf \psi_{\epsilon}(\param_k^i) \geq 0$ and, therefore, $k_{0, \epsilon}$ is almost surely finite. Thus, for $k \geq k_{0, \epsilon}$, $v_k^i = [s_{\epsilon, \alpha}(\widehat{\nabla} \objfunc(\param_k), \param_k)]^i$, which implies:
\begin{equation}
 v_k^i = - \widehat{\nabla}_i \objfunc(\param_k) + \lambda_k^i \psi_{\epsilon}'(\param_k^i)\, ,
\end{equation}
with $\lambda_k^i = 0$ if $\psi_{\epsilon}(\param_k^i) >0$ and $\lambda_k^i = (v_k^i + \widehat{\nabla}_i \objfunc(\param_k))/\psi'_{\epsilon}(\param_k^i)$ otherwise. Notice that since the sequences $(v_k), (\param_k)$ are almost surely bounded, $\sup_{k \geq k_{0, \epsilon}}|\lambda_k^i|$ is almost surely finite.

 \begin{lemma}
   Assume \Cref{assum:sum-objective}-\Cref{assum:stepsize}.
   For $i \in \{1, \dots, d\}$, and for $k \geq k_{0, \epsilon}$, $\lambda_k^i \geq 0$.
 \end{lemma}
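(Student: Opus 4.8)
The plan is a direct case analysis on which branch of the explicit formula for $s_{\epsilon,\alpha}$ is used for coordinate $i$, exploiting that the range $k\geq k_{0,\epsilon}$ rules out the clipping step. First I would record the facts that make everything well defined on this range: by definition of $k_{0,\epsilon}$ we have $s^c_{\epsilon,\alpha}(\widehat{\nabla}\objfunc(\param_k),\param_k)=s_{\epsilon,\alpha}(\widehat{\nabla}\objfunc(\param_k),\param_k)$, and since $s_{\epsilon,\alpha}$ is only defined away from the midpoints $\param^i=(c_{Q^i(\param^i)}+c_{Q^i(\param^i)+1})/2$, this forces $\psi_{\epsilon}'(\param_k^i)\neq 0$ (consistent with the remark preceding the lemma that $\lambda_k^i=(v_k^i+\widehat{\nabla}_i\objfunc(\param_k))/\psi_{\epsilon}'(\param_k^i)$ is finite). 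In particular, for $k\geq k_{0,\epsilon}$ one has $v_k^i=[s_{\epsilon,\alpha}(\widehat{\nabla}\objfunc(\param_k),\param_k)]^i$ and the multiplier $\lambda_k^i$ is characterised by $\lambda_k^i\,\psi_{\epsilon}'(\param_k^i)=v_k^i+\widehat{\nabla}_i\objfunc(\param_k)$.

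Next I would dispatch the two easy cases. If $\psi_{\epsilon}(\param_k^i)>0$, then $\lambda_k^i=0$ by the very definition of the iteration, so there is nothing to prove. If $\psi_{\epsilon}(\param_k^i)\leq 0$ and the first branch of $s_{\epsilon,\alpha}$ applies, i.e. $-\psi_{\epsilon}'(\param_k^i)\widehat{\nabla}_i\objfunc(\param_k)\geq -\alpha\psi_{\epsilon}(\param_k^i)$ (the inequality $-\alpha\psi_{\epsilon}(\param_k^i)\geq 0$ being automatic since $\alpha>0$ and $\psi_{\epsilon}(\param_k^i)\leq 0$), then $v_k^i=-\widehat{\nabla}_i\objfunc(\param_k)$, hence $\lambda_k^i\,\psi_{\epsilon}'(\param_k^i)=0$ and therefore $\lambda_k^i=0$.

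The only substantive case is $\psi_{\epsilon}(\param_k^i)\leq 0$ together with $-\psi_{\epsilon}'(\param_k^i)\widehat{\nabla}_i\objfunc(\param_k)<-\alpha\psi_{\epsilon}(\param_k^i)$, where $v_k^i=-\alpha\psi_{\epsilon}(\param_k^i)/\psi_{\epsilon}'(\param_k^i)$. Here I would multiply the defining identity for $\lambda_k^i$ by $\psi_{\epsilon}'(\param_k^i)$ to avoid dividing by a quantity of unknown sign:
\[
  \lambda_k^i\,\bigl(\psi_{\epsilon}'(\param_k^i)\bigr)^2=\bigl(v_k^i+\widehat{\nabla}_i\objfunc(\param_k)\bigr)\psi_{\epsilon}'(\param_k^i)=-\alpha\,\psi_{\epsilon}(\param_k^i)+\widehat{\nabla}_i\objfunc(\param_k)\,\psi_{\epsilon}'(\param_k^i)\,.
\]
The branch condition rewrites exactly as $\widehat{\nabla}_i\objfunc(\param_k)\,\psi_{\epsilon}'(\param_k^i)-\alpha\,\psi_{\epsilon}(\param_k^i)>0$, so the right-hand side is strictly positive; since $(\psi_{\epsilon}'(\param_k^i))^2>0$ this gives $\lambda_k^i>0$. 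Combining the three cases yields $\lambda_k^i\geq 0$ for every $i\in\{1,\dots,d\}$ and every $k\geq k_{0,\epsilon}$.

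I do not expect a genuine obstacle here; the argument is bookkeeping. The only two points needing a line of care are the justification that $\psi_{\epsilon}'(\param_k^i)\neq 0$ on the range $k\geq k_{0,\epsilon}$ — which also uses that the bound on $\epsilon$ in \Cref{th:conv_alg} keeps the dangerous midpoints out of $C_{\epsilon}$, so that $\param_k^i$ does not fall on a critical point of $\phi^i$ — and the handling of the a priori unknown sign of $\psi_{\epsilon}'(\param_k^i)$ by squaring rather than dividing. As an alternative route, one can bypass the case analysis altogether by observing that $v_k$ solves the convex quadratic program $\min_{v\in V_{\epsilon,\alpha}(\param_k)}\tfrac12\norm{v+\widehat{\nabla}\objfunc(\param_k)}^2$, that this program is separable across coordinates, and that $\lambda_k^i$ is precisely the KKT multiplier of the $i$-th constraint $v^i\psi_{\epsilon}'(\param_k^i)\geq -\alpha\psi_{\epsilon}(\param_k^i)$ (active only when $i\in\ActiveSet_{\epsilon}(\param_k)$), hence nonnegative; the computation above is just an unpacking of that fact.
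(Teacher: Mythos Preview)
Your proof is correct and essentially identical to the paper's: both do the same three-case split and, in the only nontrivial case, avoid the unknown sign of $\psi_{\epsilon}'(\param_k^i)$ by working with its square (you multiply through by $\psi_{\epsilon}'(\param_k^i)$, the paper divides by $\{\psi_{\epsilon}'(\param_k^i)\}^2$; same computation). Your closing KKT remark is a valid alternative viewpoint that the paper does not make explicit.
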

 \begin{proof}
   First, notice that if $\psi_{\epsilon}(\param_k^i) >0$, then $\lambda_k^i = 0$ by construction. Consider now the case where $\psi_{\epsilon}(\param_k^i) \leq 0$. If $- \widehat{\nabla}_i \objfunc(\param_k) \psi_\epsilon'(\param_k^i) \geq - \alpha \psi_{\epsilon}(\param_k^i)$, then $v_k^i = - \widehat{\nabla}_i\objfunc(\param_k)$ and, since for $k \geq k_{0, \epsilon}$, $\psi_{\epsilon}'(\param_k^i) \neq 0$, this implies $\lambda_k^i = 0$.
   Otherwise, $v_k^i = - \alpha \psi_{\epsilon}(\param_k^i)/\psi_\epsilon'(\param_k^i)$ and  $0 < - \alpha \psi_{\epsilon}(\param_k^i) + \widehat{\nabla}_i \objfunc(\param_k) \psi'(\param_k^i)$. Dividing the last inequality by $\{\psi_{\epsilon}'(\param_k^i)\}^2$, we obtain $0 <(-\alpha \psi_{\epsilon}(\param_k^i) +\widehat{\nabla}_i \objfunc(\param_k) \psi'_{\epsilon}(\param_k^i)) /\{\psi_{\epsilon}'(\param_k^i)\}^2 = (v_k^i + \widehat{\nabla}_i \objfunc(\param_k))/\psi'_{\epsilon}(\param_k^i) = \lambda_k^i  $.
 \end{proof}

The rest of this section is devoted to the proof of Lemma~\ref{lm:ineq_bound}.

  Denote $M= \max(M_{\epsilon, c},\sup_{ 1 \leq j \leq N} M_{\objfunc_j})$ and notice that for any $k \geq 0$ and $i\in \{1, \dots, d\}$, $\|\widehat{\nabla} \objfunc(\param_k)\| \leq M$ and $|v_k^i| \leq M$. Therefore, $|\param_{k+1}^i - \param_k^i| \leq \gamma_k M$. The lemma will be proved by the following claims.

\textit{Claim 1. For $i \in \{1, \dots, d\}$, and for $2 \leq j \leq K_i-1$ if the set $[(c^i_j + c^i_{j-1})/2, (c^i_j + c^i_{j+1})/2)$ is visited by $\param_k^i$ infinitely often, then there is $k_0$ such that for all $k > k_0$, $\param_k^i \in [(c^i_j + c^i_{j-1})/2, (c^i_j + c^i_{j+1})/2) $.}

Indeed, fix such a $j$ and denote $[c_{-}, c_{+}]$ the set $C_{\epsilon}^i \cap [(c^i_j + c^i_{j-1})/2, (c^i_j + c^i_{j+1})/2)$, where $C_{\epsilon}^i$ is the projection of $C_{\epsilon}$ onto the $i$-th coordinate. Define $k_0 = \sup \{ k : \gamma_k M \geq \max( c_{-} - (c^i_j + c^i_{j-1})/2, (c^i_j + c^i_{j+1})/2 - c_{+}) \}$. Consider $k \geq k_0$, if $ (c^i_j + c^i_{j-1})/2 \leq \param_k^i \leq c_{-} $ (we are on the left side of the interval), then the iterate is pushed to the right and $\param_k^i \leq \param_{k+1}^i$. Furthermore, by definition of $k_0$, it holds that $\param_{k+1}^i \leq c_{-} + \gamma_k M \leq (c^i_j + c^i_{j+1})/2$. This implies, that in this case $\param_{k+1}^i$ stays in $[(c^i_j + c^i_{j-1})/2, (c^i_j + c^i_{j+1})/2)$. Otherwise, if $ c_{+}\leq \param_{k}^i < (c^i_j + c^i_{j+1})/2$ (we are on the right side of the interval), then, we are pushed to the left, and, by a similar reasoning, $\param_{k+1}^i \in [(c^i_j + c^i_{j-1})/2, (c^i_j + c^i_{j+1})/2)$. Finally, if $\param_k^i \in [c_{-}, c_{+}]$, then by the way $k_0$ was defined we obtain that $\param_{k+1}^i \in [(c^i_j + c^i_{j-1})/2, (c^i_j + c^i_{j+1})/2) $. Thus, we have shown that for $k \geq k_0$, if $\param_{k}^i$ is in $[(c^i_j + c^i_{j-1})/2, (c^i_j + c^i_{j+1})/2)$, then for all $k' \geq k$, the same will be true for $\param_{k'}^i$, which completes the proof of the claim.

The proof of the following two claims is similar to the one of Claim 1.

\textit{Claim 2. For $i \in \{1, \dots, d\}$, if the set $(-\infty, (c^i_1+c^i_2)/2)$ is visited by $\param_k^i$ infinitely often, then there is $k_0$ such that for all $k > k_0$, $\param_k^i \in (-\infty, (c^i_1+c^i_2)/2)$.}

\textit{Claim 3. For $i \in \{1, \dots, d \}$, if the set $[(c^i_{K_i -1} + c^i_{K_i})/2, + \infty)$ is visited infinitely often, then there is $k_0$ such that for all $k > k_0$, $\param_{k^i} \in [(c^i_{K_i -1} + c^i_{K_i})/2, + \infty)$.}

In the following, without loss of generality, we will assume that we are in the context of the first claim and that there is $k_0$, such that for all $k \geq k_0 $, $\param_k^i \in [(c^i_j + c^i_{j-1})/2, (c^i_j + c^i_{j+1})/2)$ (the two other cases can be treated in the exact same manner).

Denote, as previously, $[c_{-}, c_{+}]$ the set $C_{\epsilon}^i \cap [(c^i_j + c^i_{j-1})/2, (c^i_j + c^i_{j+1})/2)$, where $C_{\epsilon}^i$ is the projection of $C_{\epsilon}$ onto the $i$-th coordinate.

\textit{Claim 4. There is $k_0$, such that if there are two index $m_{+} \geq m_{-} > k_0$ such that $\param_{m_{-}} < c_{-} < c_{+} < \param_{m_{+}}$, then there is $m$, satisfying $ m_{-} \leq m \leq m_{+}$, such that $\param_{m}^i \in [c_{-}, c_{+}]$.}

Indeed, define $k_0 = \sup \{ k: \gamma_k M \geq c_{+} - c_{-} \}$. Let $m_{-}, m_{+}$ be as in the claim and consider $m = \inf \{ k \geq m_{-} : \param_{k}^i \geq c_{-}\}$. It holds that $\param_{m-1}^i < c_{-} \leq \param_{m}^i \leq \param_{m-1}^i + \gamma_k M$. Since $m \geq k_0$, this implies that $\param_{m}^i \leq c_{-} + \gamma_k M \leq c_{+}$, which proves the claim.

\textit{Claim 5. There is $k_0$, such that if there are two index $m_{-} \geq m_{+} >k_0$, such that $\param_{m_{-}} < c_{-} < c_{+} < \param_{m_{+}}$, then there is $m$, satisfying $ m_{+} \leq m \leq m_{-}$, such that $\param_{m}^i \in [c_{-}, c_{+}]$.}
The proof is the identical to the one of the previous claim.

From the fourth and fifth claims, there are only three possible behaviors of $\param_k^i$. Either, $\param_k^i$ visits $[c_{-}, c_{+}]$ infinitely often (this will be treated by the sixth claim), or for $k$ large enough, $\param_k^i$ stays at the left of $[c_{-}, c_{+}]$ (this will be treated by the seventh claim), or it stays at the right of $[c_{-}, c_{+}]$ (this will be treated by the eights claim).

\textit{Claim 6. If $\param_k^i$ visits $[c_{-}, c_{+}]$ infinitely often, then $  \limsup \param_k^i \leq c_{+}$ and $\liminf \param_k^i \geq c_{-}$.}

Notice that if $\param_{k}^i > c_{+}$, then $\param_{k+1}^i \leq \param_k^i$, and if  $\param_{k}^i \leq c_{+}$ and $\param_{k+1}^i \leq c_+ + \gamma_k M$. Thus, if $k$ is such that $\param_k^i \in [c_{-}, c_{+}]$, then $\sup_{k_1 \geq k} \param_{k_1}^i \leq c_{+} + \gamma_k M$. Letting $k$ tend to infinity, proves first part of the claim. Similarly, if $k$ is such that $\param_k^i \in [c_{-}, c_{+}]$, then $\inf_{k_1 \geq k} \param_{k_1}^i \geq c_{-} - \gamma_k M$. Letting $k$ tend to infinity proves the second part of the claim.

\textit{Claim 7. If for all $k$ large enough, $\param_k^i > c_{+}$, then $\param_k^i \rightarrow c_{+}$.}

Indeed, in this case, for $k$ large enough, the sequence $\param_k^i$ is decreasing and thus has a limit. Denote this limit $\param_{+}$ and assume that $\param_{+} \neq c_{+}$, then for $k$ large enough, it holds that $\param_{k+1}^i = \param_k^i + \gamma_k v_k^i \leq \param_k^i - \gamma_k M_{+}$, where $M_{+} = \inf \{ \min(M_{\epsilon,c}, \alpha |\psi_{\epsilon}(\param)|/|\psi_{\epsilon}'(\param)| ): \param \in [\param_{+}, (c_j + c_{j+1})/2)\} >0$. Thus, for any $m$, it holds that $\param_{k+m+1}^i \leq \param_{k}^i - M_{+}\sum_{i=0}^{m} \gamma_{k+i}$. Since $\sum_{j=0}^{+\infty}\gamma_j = +\infty$, this shows that this case is impossible. Hence, $\param_k^i \rightarrow c_{+}$.

\textit{Claim 8. If for all $k$ large enough, $\param_k^i < c_{-}$, then $\param_k^i \rightarrow c_{-}$.}

Similarly, to the previous claim, for $k$ large enough the sequence $\param_k^i$ is increasing and thus has a limit. If $\param_{-} \neq c_{-}$, then for $k$ large enough and $m \geq 0$, it holds that $\param_{k+m+1}^i \geq \param_k^i + M_{-}\sum_{i=0}^m \gamma_{k+i}$, where $M_{-} = \inf \{ \min(M_{\epsilon,c}, \alpha |\psi_{\epsilon}(\param)|/|\psi_{\epsilon}'(\param)|) : \param \in ( (c_{j-1} + c_{j})/2, \param_{-}] \} >0$. Since $\sum_{j=0}^{+\infty}\gamma_j = +\infty$, this implies that $\param_{-} \neq c_{-}$ is impossible.  Hence, $\param_k^i \rightarrow c_{-}$.

These claims show that for every $i \in \{1, \dots, d\}$, $\liminf\psi_{\epsilon}(\param^i_k) \geq 0$. Therefore, $\limsup \dist(\param_k, C_{\epsilon}) = 0$.
\section{Numerical results}
\label{sec:app:numerical-results}
In this section, we give more details about our experiments, and present results on new tasks.
\subsection{Toy convex example}
We give more results about the toy example detailed in \Cref{sec:experiments}. We only compare \Algo\ and BinaryConnect \cite{Courbariaux2015} in a logistic regression problem, but we test several settings to highlight the strengths of \Algo\ : all methods are trained for a longer time ($50$ epochs) using the SGD optimizer, the learning rate is set to $1$, and gradients are calculated on random batches of $100$ or $1000$ samples. Note the rest of the experimental setting is identical: we generate $n=6000$ feature vectors $\{x_k\}_{k=1}^n$ in dimension $d=10$ drawn independently from the uniform distribution in $[-1,1]$. We randomly choose an optimal vector $w_*$ on the vertices of the hypercube and generate the labels  as follows: $y_k \sim \operatorname{Bernoulli}(\{1+\rme^{-x_k^\top  w_*}\}^{-1})$.
For completeness, we study how a full precision SGD converges to the optimal point $w_*$ of this convex problem.
\begin{figure}[h!]
    \centering
    \includegraphics[scale=0.45]{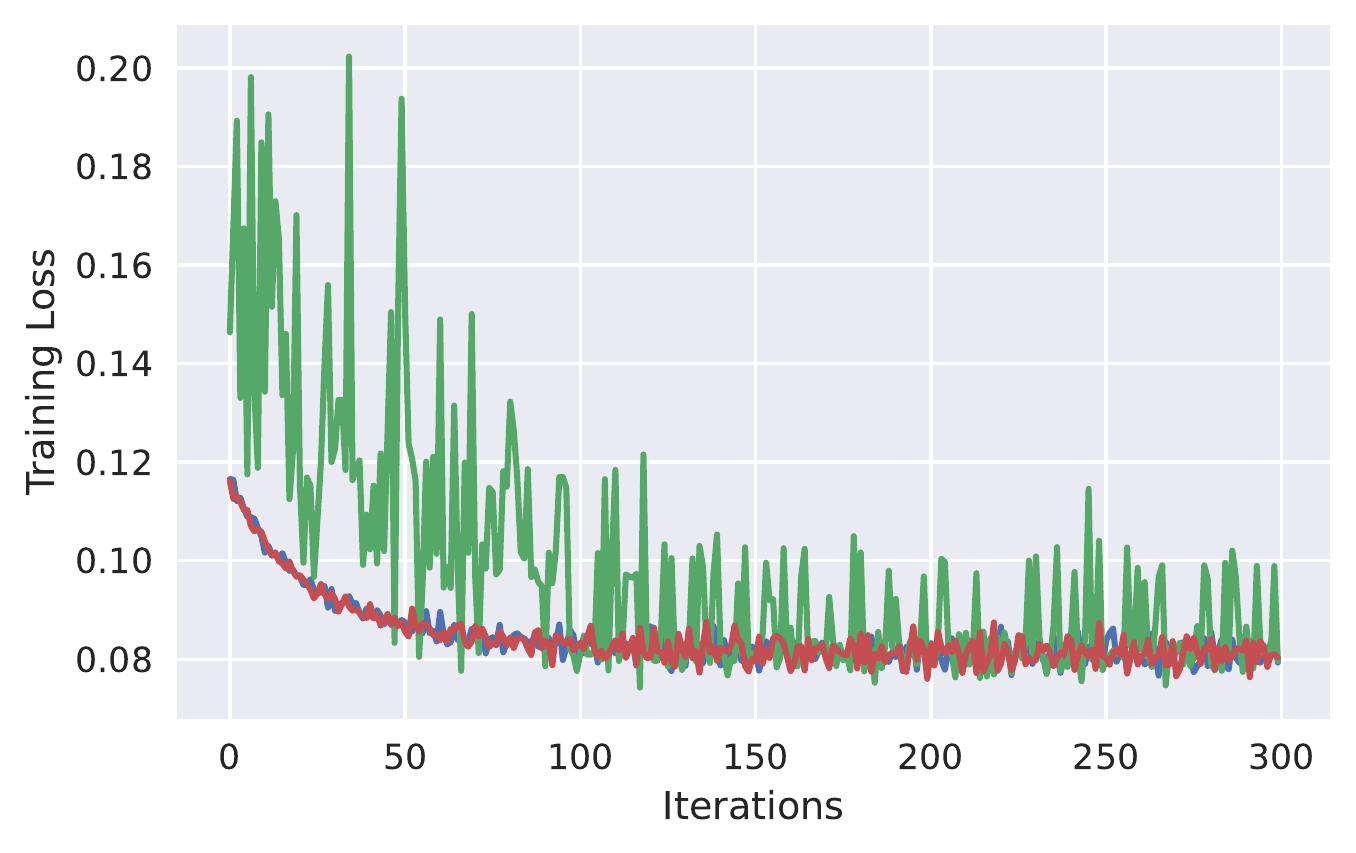}
    \includegraphics[scale=0.45]{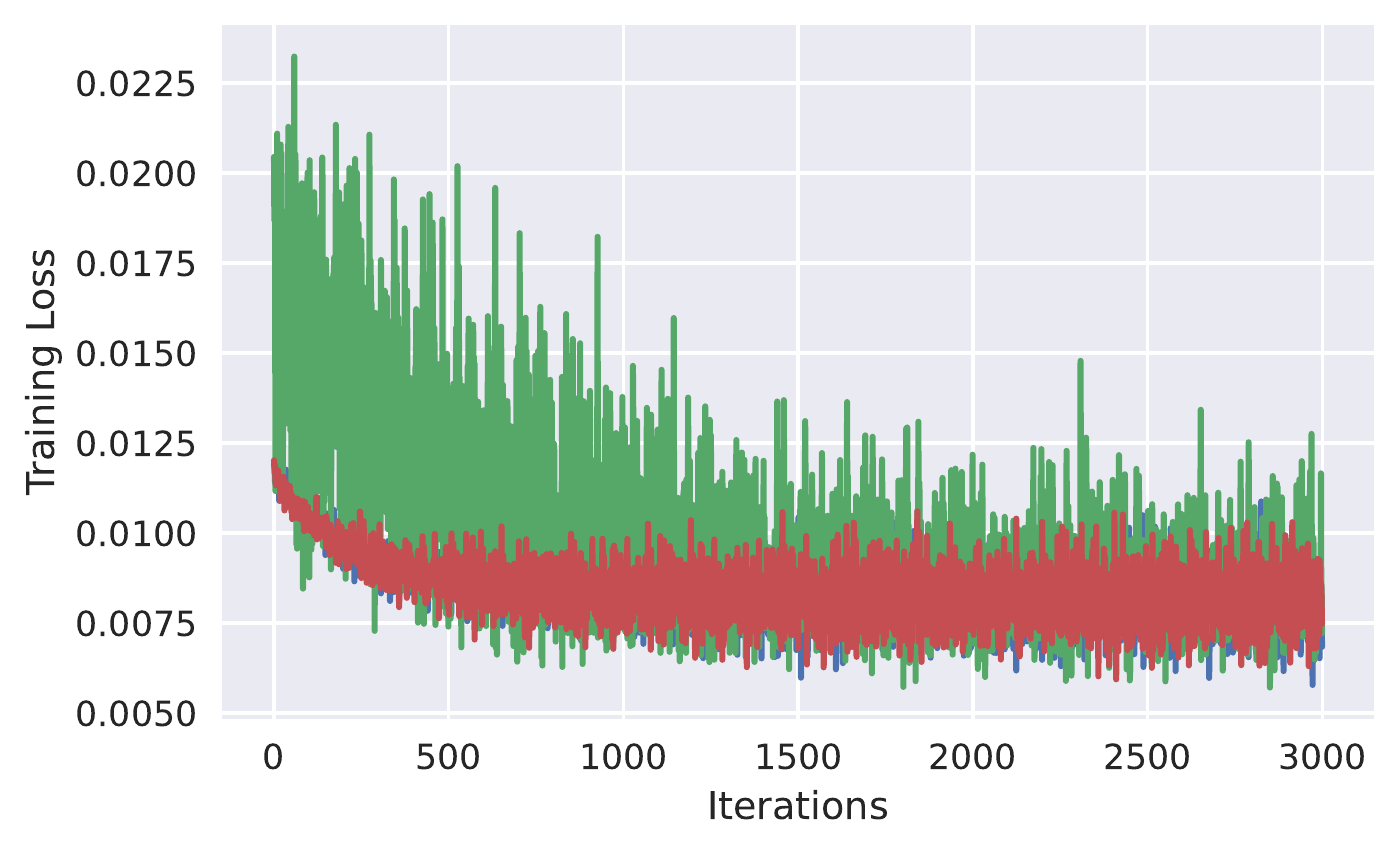}
    \caption{Training losses for the logistic regression problem with batches of size $1000$ (up panel) and $100$ (down panel).  BinaryConnect - green -  \Algo\ - blue -  full Precision methods - red -. The x-axis represents the iteration index.}
\label{fig:logistic_regression}
\end{figure}
The same conclusions can be drawn: \Algo\ is very close to the full precision method while STE method suffers from oscillations. Note however that decreasing the batch size seems to have a beneficial effect for STE, the larger variance helps to reduce the gap between STE and the other methods  (see down panel in \Cref{fig:logistic_regression}).
\subsection{"2 moons" example}
We consider the binary classification problem "2 moons dataset" presented in \Cref{sec:experiments} and inspired by \cite{meng2020training}. The training dataset consists of $n=2000$ samples and 200 test samples and is displayed in \Cref{fig:twomoons_dataset}.
A BNN with 9 weights is trained with one-hot coding and logistic loss. This BNN uses ReLu activations and its architecture is shown in \Cref{fig:network}. Four gradient-based approaches - a full precision NN, BinaryConnect, AdaSTE, and \Algo\ - are compared to exhaustive search.
In the latter, all $2^9$ binary configurations on the training and test sets are compared. \Cref{fig:histogram_bruteforce} shows that different configurations lead to near-optimal performance. It is worth noting that permutation invariance implies that many solutions are equivalent in this simple example.
\begin{figure}[h!]
    \begin{subfigure}[b]{0.45\linewidth}
        \centering

    \includegraphics[scale=0.45]{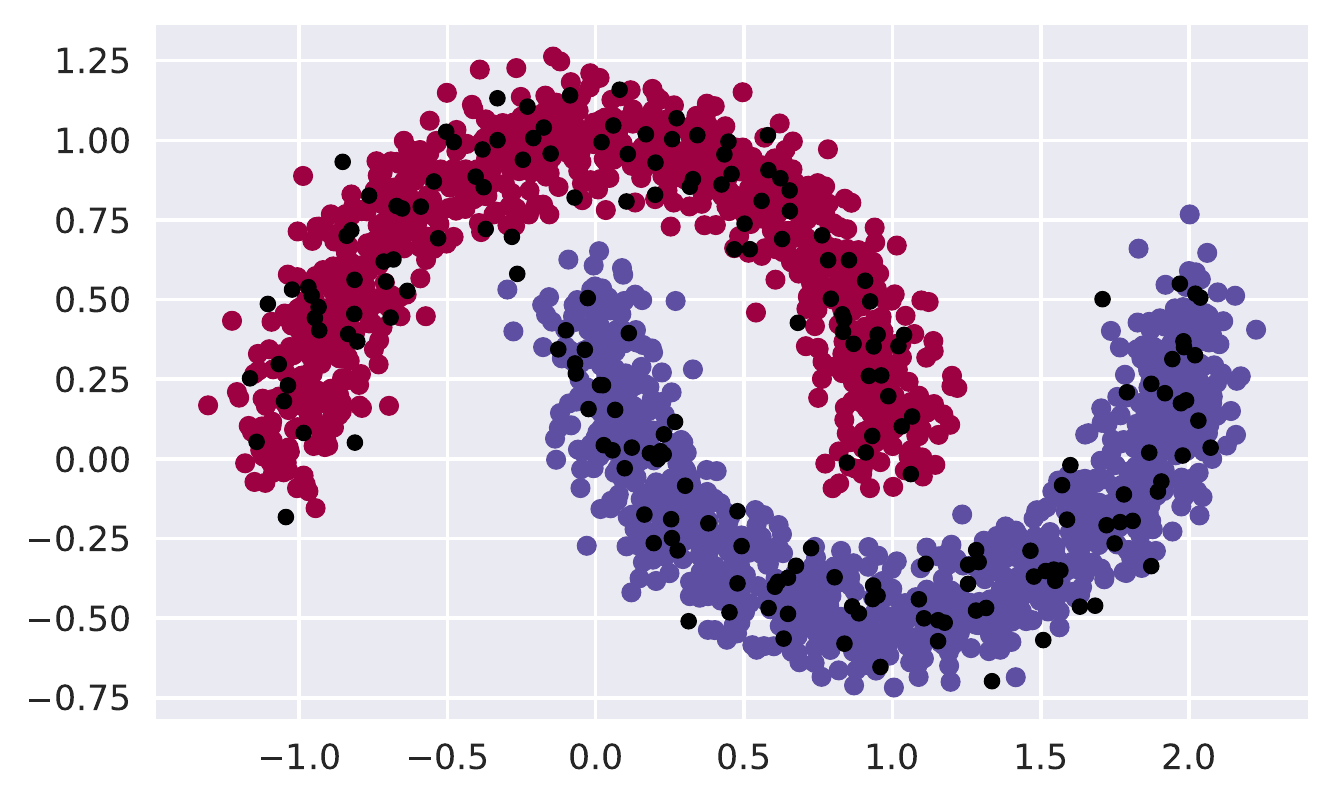}
        \caption{Two moons dataset} \label{fig:twomoons_dataset}
  \end{subfigure}\\
      \tikzset{%
  every neuron/.style={
    circle,
    draw,
    minimum size=.5cm
  },
  neuron missing/.style={
    draw=none,
    scale=2,
    text height=0.13cm,
    execute at begin node=\color{black}$\vdots$
  },
}
\begin{subfigure}[b]{0.45\linewidth}
    \centering

\begin{tikzpicture}[x=1.0cm, y=0.8cm, >=stealth]

\foreach \m/\l [count=\y] in {1,2}
  \node [every neuron/.try, neuron \m/.try] (input-\m) at (0,1.-\y) {};

\foreach \m [count=\y] in {1,2,3}
  \node [every neuron/.try, neuron \m/.try ] (hidden-\m) at (2,2-\y*1.25) {};

\foreach \m [count=\y] in {1}
  \node [every neuron/.try, neuron \m/.try ] (output-\m) at (4,.5-\y) {};

\foreach \l [count=\i] in {1,2}
  \draw [<-] (input-\i) -- ++(-1,0)
    node [above, midway] {$input_\l$};

\foreach \l [count=\i] in {1}
  \draw [->] (output-\i) -- ++(1,0)
    node [above, midway] {$output$};

\foreach \i in {1,2}
  \foreach \j in {1,2,3}
    \draw [->] (input-\i) -- (hidden-\j);

\foreach \i in {1,2,3}
  \foreach \j in {1}
    \draw [->] (hidden-\i) -- (output-\j);

\foreach \l [count=\x from 0] in {Input, Hidden, Ouput}
  \node [align=center, above] at (\x*2,2) {\l \\ layer};

\end{tikzpicture}
\caption{Basic BNN structure in dimension $d=9$} \label{fig:network}
\end{subfigure}
    \caption{2D Dataset and the associated BNN.}
\end{figure}
\begin{figure}[h!]
    \centering
    \includegraphics[scale=0.5]{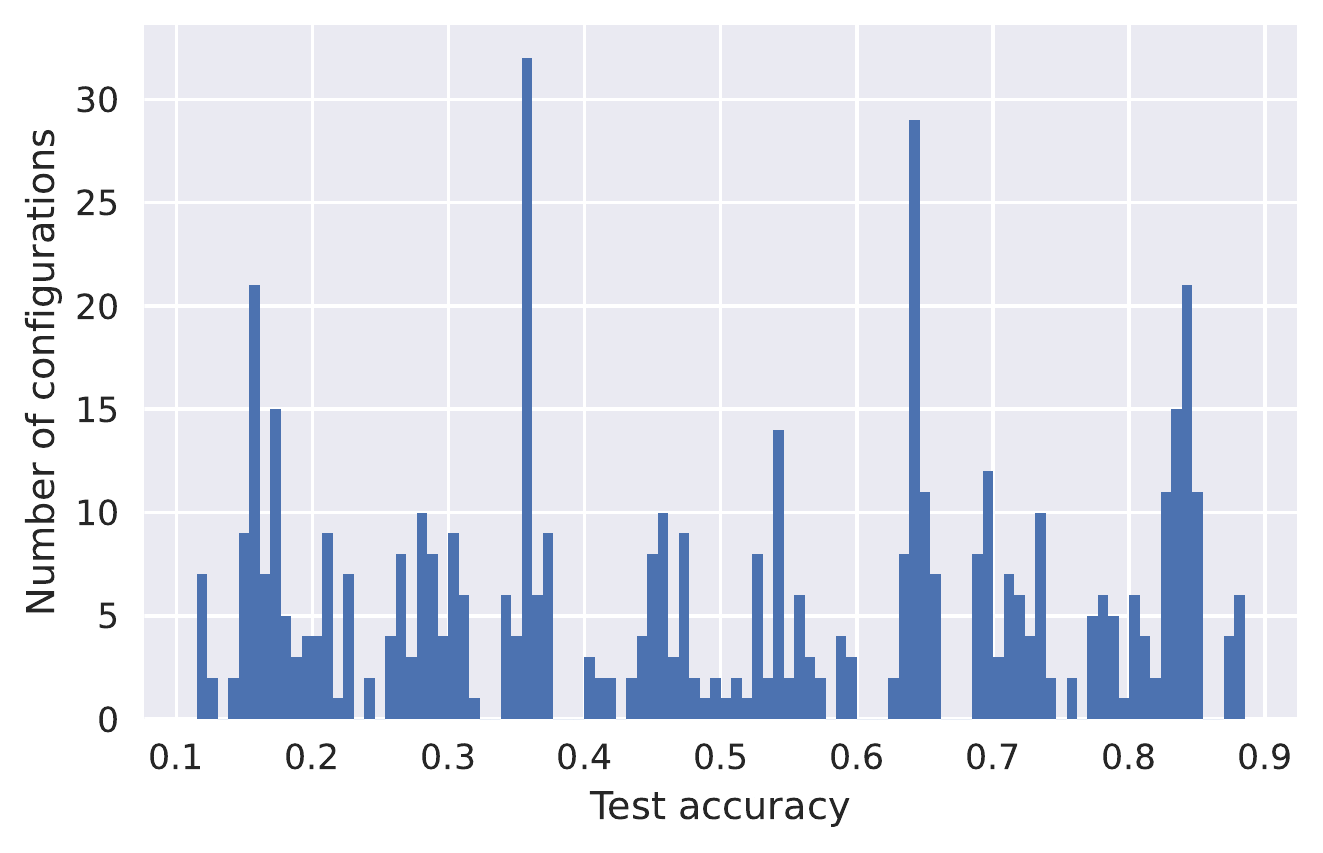}
    \caption{Histogram of test accuracies for the exhaustive search in dimension $d=512$}
    \label{fig:histogram_bruteforce}
\end{figure}
\begin{table}[t]
  \caption{Logistic Loss on test samples (average over 50 random experiments) for the binary classification problem in dimension $d=512$ after 100 steps.}
  \label{tab:loss_twomoons}
  \centering
  \begin{scriptsize}
  \begin{tabular}{cc}
    \toprule
    Method & Loss ($\times 10^{-3}$) \\
    \midrule
    Full-precision NN [W32/A32]& $2.045\pm 0.005$ \\
    \midrule
    BinaryConnect [W1/A32]& $2.32 \pm 0.11$ \\
    AdaSTE [W1/A32]& $2.24 \pm 0.10$ \\
    \midrule
    \Algo\ [W1/A32]& \textbf{$2.11$} $\pm 0.01$\\
    Exhaustive search [W1/A32]& \textbf{2.1} \\
    \bottomrule
  \end{tabular}
  \end{scriptsize}
\end{table}

\subsection{Deep learning experiments}
The performances reported in \Cref{sec:experiments} were obtained with the best combination of hyperparameters that we tested. Other combinations are listed in \Cref{tab:accuracy_cifar10_other} for the CIFAR-10 dataset.
\begin{table}[h!]
  \caption{Best Test accuracy after 100 training epochs on CIFAR-10.}
  \label{tab:accuracy_cifar10_other}
  \centering
  \begin{scriptsize}
  \begin{tabular}{cccccc}
    \toprule
    $\alpha$ / $lr$ & 0.5/0.06 & 0.2/0.01 & 0.2/0.03 & 0.2/0.05 & 0.4/0.01 \\
    \midrule
    \Algo\ & 88.51 & 85.60 & 88.42 & 88.32 & 84.50\\
    \bottomrule
  \end{tabular}
  \end{scriptsize}
\end{table}The performances reported in \Cref{tab:accuracy_cifar10} are still dependent on hyperparameter grid search and could be further improved if more resources are available.

\subsection{ImageNet with binary weights}
In this section, we compare the performance of \Algo\ [W1/A32] on a large dataset and compare it to BinaryConnect \cite{Courbariaux2015,hubara2016binarized}, Mirror Descent \cite{ajanthan2021mirror}, AdaSTE \cite{le2021adaste}, a standard full-precision NN, and a hypersphere-projected full-precision NN. To ensure a fair comparison, we compare the different methods using the same NN architecture. Moreover, we do not add bias in any layer, but introduce batch normalisation (without learning parameters) after each layer. The last connected layer is kept in full precision - a standard practice in BNN -. Contrary to \citep{liu2020reactnet, chmiel2021logarithmic}, we have kept the first convolutional layer binary. We do not use layerwise scalar contrary to \cite{Rastegari2016}.

We use a training setting similar to \Cref{sec:experiments}. We have adapted the code of \cite{ajanthan2021mirror, le2021adaste} to Resnet-18 for ImageNet. The hyperparameters for AdaSTE and MD are those prescribed for TinyImageNet. We use the same default data normalizations as the methods we compare to: we resize the input images to $256 \times 256$ and then randomly crop them to $224 \times 224$ while centering them to the appropriate sizes during training. Standard multiclass cross entropy loss is used. All models are fine-tuned for $100$ epochs using the Adam \citep{kingma2014adam} optimizer with dynamics of $0.9$ and $0.999$ and a batch of size $512$. The full precision NN is trained with an initial learning rate of $0.08$. The projected full precision NN uses a projected gradient algorithm. The same hyperparameters are used as in the "simple" algorithm, except that a deterministic projection onto the hypersphere is performed at each iteration. The \Algo\ method is described in \Cref{alg:detailed}, and we have set $\alpha$ to $0.5$. The precision threshold $\epsilon$ is decreased from epoch to epoch: it is set to $1$ at the beginning and then exponentially annealed to $.88^t$ in the last 50 epochs, where $t$ is the epoch.

Just as with \Cref{sec:experiments}, we apply the function $\operatorname{sign}(\cdot)$ to our NN before evaluating it on the test set. Each method was randomly initialized and independently executed once (due to ImageNet's longer training time). The learning rate at epochs $[20, 40]$ is divided by $2$ for all methods.
\begin{table}[h!]
  \caption{Best Test accuracy after 100 training epochs.}
  \label{tab:accuracy_imagenet}
  \centering
  \begin{tabular}{ccc}
    \toprule
    Method & \multicolumn{2}{c}{ImageNet (ResNet-18)} \\
    & Top-1 & Top-5\\
    \midrule
    Full-precision & 66.39 & 95.32 \\
    BinaryConnect & 45.85 & 71.05 \\
    MD & 46.38 & 71.18 \\
    AdaSTE & 35.37 & 62.22 \\
    Projected gradient & 2.58 & 7.93 \\
    \Algo\ & \textbf{46.95} & \textbf{72.11} \\
    \bottomrule
  \end{tabular}
\end{table}
This task is more difficult than TinyImageNet's, but we get the same result: \Algo\ outperforms all current baselines. Moreover, \Algo\ yields good  results even when trained from scratch, compared to methods \cite{bai2018proxquant, liu2020reactnet} that require fine-tuning using a pre-trained network.

\subsection{BNN with binary activations}
BNN with binary weights and binary activations offer significant time savings in inference. We applied our training procedure \Algo\ [W1/A1] to a VGG-small with $\operatorname{sign(\cdot)}$ activations instead of ReLu activations to enable inference with only XNOR and bit-counting operations. The quantization of activations is here too extreme to apply the same procedure as in \Cref{sec:experiments}. The biased quantizer SAWB from \cite{choi2018bridging} does not work anymore (empirically). We assume the loss of neural gradient information is too important when activations are quantized on 2 levels.

During the training phase, a batch normalisation layer is inserted before each sign activation to scale the variance. In the backward pass, the derivative of $\operatorname{sign(\cdot)}$ is approximated by the derivative of the function $\operatorname{tanh(\cdot)}$. During inference, we can get rid of the batch normalization (only the empirical mean is conserved and added to the bias term) and compute only binary operations.

We compare test accuracy with the CIFAR-10 dataset, which consists of 50000 training images and 10000 test images (in 10 classes). BNNs are fine-tuned for $100$ epochs using the Adam optimizer with a dynamic range of $0.9$ and $0.999$ and a batch size of $100$ with a learning rate of $0.03$. The best test classification accuracies of binary networks obtained with \Algo\ are listed in \Cref{tab:accuracy_cifar10_bnn} for different values of $\alpha \in [0.2, 0.5, 0.7]$.
\begin{table}[h!]
  \caption{Best Test accuracy after 100 training epochs.}
  \label{tab:accuracy_cifar10_bnn}
  \centering
  \begin{tabular}{cccc}
    \toprule
    $\alpha$ & 0.2 & 0.5 & 0.7 \\
    \midrule
    \Algo\ [W1/A1] & 81.12 & \textbf{84.34} & 82.92 \\
    \bottomrule
  \end{tabular}
\end{table}
The preliminary results reported in \Cref{tab:accuracy_cifar10_bnn} show that \Algo\ is state-of-the art  for training BNNs with binary weights and activations. Activation with the function $\operatorname{sign(\cdot)}$ leads to a loss in expressive power and consequently a loss in performance. Several works introduce additional tricks such as real scaling factors \cite{Rastegari2016} to bridge the gap between binary signals and their real counterparts. These tricks can be easily implemented in our approach \Algo. \end{document}